\pgfplotsset{compat=1.16}
\newcommand{\dist}{\mathcal{D}}
\newcommand{\psnthresh}{\mathsf{{t_p}}}
\newcommand{\nbrthresh}{\mathsf{{t_{nb}}}}
\newcommand{\nbrset}[1]{\mathsf{S}^{#1}_{\mathsf{nb}}}
\newcommand{\advdata}{\mathsf{{D_{adv}}}}
\newcommand{\psndata}{\mathsf{{D_{p}}}}
\newcommand{\cham}{\text{Chameleon }}
\newcommand{\traindata}{\mathsf{{D_{tr}}}}
\newcommand{\tgtmodel}{{{M_{t}}}}
\newtheorem{theorem}{Theorem}[section]
\newcommand{\abs}[1]{| #1 |}
\newcommand{\Adv}{\ensuremath{\mathcal{A}}\xspace}
\newcommand{\challenger}{\ensuremath{\mathcal{C}}\xspace}
\newsavebox{\@brx}
\newcommand{\llangle}[1][]{\savebox{\@brx}{\(\m@th{#1\langle}\)}%
	\mathopen{\copy\@brx\kern-0.6\wd\@brx\usebox{\@brx}}}
\newcommand{\rrangle}[1][]{\savebox{\@brx}{\(\m@th{#1\rangle}\)}%
	\mathclose{\copy\@brx\kern-0.6\wd\@brx\usebox{\@brx}}}
\newcommand{\Sh}{\ensuremath{\mathsf{sh}}}
\newcommand{{\piaSh}}[1]{\ensuremath{\Pi^{#1}_{\Sh}}}
\definecolor{lightmintbg}{rgb}{.53,.80,.92}
\tikzstyle{maldo} = [rectangle, minimum width=2.5cm, minimum height=0.2cm, text centered, draw=black, fill=orange!75]
\tikzstyle{hdo} = [rectangle,,minimum width=2.5cm, minimum height=0.25cm,text centered, draw=black, fill = lightmintbg!60]
\tikzstyle{malserver} = [rectangle,minimum width=0.7cm, minimum height=0.3cm, text centered, draw=black, fill=red!60]
\tikzstyle{hserver} = [rectangle,minimum width=0.7cm, minimum height=0.3cm, text centered, draw=black, fill=lightmintbg!60]
\tikzstyle{soc} = [rectangle, rounded corners,dashed,minimum width=3.6cm, minimum height=2.6cm, draw=black]
\tikzstyle{nota} = [rectangle, minimum width=8.5cm, minimum height= 0.4cm, font = \small, draw=black]
\tikzstyle{mdnota} = [rectangle, minimum width=0.4cm, minimum height=0.2cm, text centered,font = \small, draw=black, fill=orange!75]
\tikzstyle{msnota} = [rectangle,minimum width=0.4cm, minimum height=0.2cm, text centered, font = \small, draw=black, fill=red!60]
\tikzstyle{hnota} = [rectangle, minimum width=0.4cm, minimum height=0.2cm, text centered,font = \small, draw=black, fill=lightmintbg!60]
\tikzstyle{sarrow} = [ultra thin, <->,latex-latex]
\tikzstyle{darrow} = [thin,->,>=stealth]
\newlength{\maxlen}
\setlist[description]{style=unboxed,leftmargin=0cm}
\newcounter{itemcount}
\newcommand{\figlab}[1]{\label{fig:#1}}
\newenvironment{boxfig*}[2]{
	\begin{figure*}[h!]		
		\fontsize{5}{5}\selectfont
		\newcommand{\FigCaption}{#1}
		\newcommand{\FigLabel}{#2}
		\vspace{-.05cm}
		\begin{center}
			\begin{small}			 
				\begin{adjustbox}{max width=\textwidth}
					\begin{tabular}{@{}|@{~~}l@{~~}|@{}}
						\hline
						\rule[-1ex]{0pt}{1ex}\begin{minipage}[b]{.95\linewidth}
							\vspace{1ex}	
						}{%
						\end{minipage}\\
						\hline
					\end{tabular}	
				\end{adjustbox}		
			\end{small}
			\vspace{-0.1cm}
			\caption{\FigCaption}
			\figlab{\FigLabel}
		\end{center}
		\vspace{-.38cm}
	\end{figure*}
}
\newenvironment{myboxfig*}[2]{
	\begin{figure*}[!htb]		
		\fontsize{5}{5}\selectfont
		\newcommand{\FigCaption}{#1}
		\newcommand{\FigLabel}{#2}
		\vspace{-.10cm}
		\begin{center}
			\caption{\FigCaption}
			\begin{small}			 
				\begin{adjustbox}{max width=\textwidth}
					\begin{tabular}{@{}|@{~~}l@{~~}|@{}}
						\hline
						\rule[-1ex]{0pt}{1ex}\begin{minipage}[b]{.95\linewidth}
							\vspace{1ex}	
						}{%
						\end{minipage}\\
						\hline
					\end{tabular}	
				\end{adjustbox}		
			\end{small}
			\vspace{-0.25cm}
			\figlab{\FigLabel}
		\end{center}
		\vspace{-.38cm}
	\end{figure*}
}
\newenvironment{titlebox}[5]
{\mdfsetup{
		style=#2,
		innertopmargin=1.1\baselineskip,
		skipabove={\dimexpr0.7\baselineskip+\topskip\relax},
		skipbelow={1.5em},needspace=3\baselineskip,
		singleextra={\node[#3,right=10pt,overlay] at (P-|O){~{\sffamily\bfseries #1 }};},%
		firstextra={\node[#3,right=10pt,overlay] at (P-|O) {~{\sffamily\bfseries #1 }};},
		frametitleaboveskip=9em,
		innerrightmargin=5pt
	}
	\newcommand{\TitleCaption}{#4}
	\newcommand{\TitleLabel}{#5}
	\begin{mdframed}[font=\small]
		\setlist[itemize]{leftmargin=13pt}\setlist[enumerate]{leftmargin=13pt}\raggedright%
	}
	{\end{mdframed}
	\vspace{-1.5em}
	{\captionof{figure}{\normalfont \TitleCaption}\label{\TitleLabel}}
	\medskip
}
\tikzstyle{normal} = [thick, fill=white, text=black, draw, rounded corners, rectangle, minimum height=.7cm, inner sep=3pt]
\tikzstyle{gray} = [thick, fill=gray!90, text=white, rounded corners, rectangle, minimum height=.7cm, inner sep=3pt]
\newenvironment{systembox}[3]
{ \begin{titlebox}{\normalfont #1}{roundbox}{normal}{#2}{#3}}
	{\end{titlebox}}
\newenvironment{gsystembox}[3]
{\vspace{\baselineskip}\begin{titlebox}{Global Functionality \normalfont #1}{roundbox}{normal}{#2}{#3}}
	{\end{titlebox}}
\newenvironment{protocolbox}[3]
{\begin{titlebox}{Protocol \normalfont #1}{commonbox}{normal}{#2}{#3}}
	{\end{titlebox}}
\newenvironment{algobox}[3]
{\begin{titlebox}{Algorithm \normalfont #1}{commonbox}{normal}{#2}{#3}}
	{\end{titlebox}}
\newenvironment{reductionbox}[3]
{\begin{titlebox}{Reduction \normalfont #1}{commonbox}{normal}{#2}{#3}}
	{\end{titlebox}}
\newenvironment{gamebox}[3]
{\begin{titlebox}{Game \normalfont #1}{commonbox}{gray}{#2}{#3}}
	{\end{titlebox}}
\newenvironment{simulatorbox}[3]
{\begin{titlebox}{Simulator \normalfont #1}{commonbox}{normal}{#2}{#3}}
	{\end{titlebox}}
\newenvironment{systembox*}[3]
{\begin{strip}
		\vspace{\baselineskip}\begin{titlebox}{Functionality \normalfont #1}{roundbox}{normal}{#2}{#3}}
		{\end{titlebox}
\end{strip}}
\newenvironment{gsystembox*}[3]
{\begin{strip}
		\vspace{\baselineskip}\begin{titlebox}{Global Functionality \normalfont #1}{roundbox}{normal}{#2}{#3}}
		{\end{titlebox}
\end{strip}}
\newenvironment{protocolbox*}[3]
{\begin{strip}
		\begin{titlebox}{Protocol \normalfont #1}{commonbox}{normal}{#2}{#3}}
		{\end{titlebox}
\end{strip}}
\newenvironment{algobox*}[3]
{\begin{strip}
		\begin{titlebox}{Algorithm \normalfont #1}{commonbox}{normal}{#2}{#3}}
		{\end{titlebox}
\end{strip}}
\newenvironment{reductionbox*}[3]
{\begin{strip}
		\begin{titlebox}{Reduction \normalfont #1}{commonbox}{normal}{#2}{#3}}
		{\end{titlebox}
\end{strip}}
\newenvironment{gamebox*}[3]
{\begin{strip}
		\begin{titlebox}{Game \normalfont #1}{commonbox}{gray}{#2}{#3}}
		{\end{titlebox}
\end{strip}}
\newenvironment{simulatorbox*}[3]
{\begin{strip}
		\begin{titlebox}{Simulator \normalfont #1}{commonbox}{normal}{#2}{#3}}
		{\end{titlebox}
\end{strip}}
\newenvironment{titlebox*}[5]
{\mdfsetup{
		style=#2,
		innertopmargin=0.3\baselineskip,
		skipabove={1.2em},
		skipbelow={1em},needspace=3\baselineskip,
		frametitleaboveskip=5em,
		innerrightmargin=5pt
	}
	\newcommand{\TitleCaption}{#4}
	\newcommand{\TitleLabel}{#5}
	\begin{mdframed}[font=\small]
		\setlist[itemize]{leftmargin=13pt}\setlist[enumerate]{leftmargin=13pt}\raggedright%
	}
	{\end{mdframed}
	\vspace{-1.5em}
	{\captionof{figure}{\normalfont \TitleCaption}\label{\TitleLabel}}
	\medskip
}
\newenvironment{mysystembox*}[3]
{\begin{strip}
		\vspace{\baselineskip}\begin{titlebox*}{Functionality \normalfont #1}{myroundbox}{normal}{#2}{#3}}
		{\end{titlebox*}
\end{strip}}
\newenvironment{mygsystembox*}[3]
{\begin{strip}
		\vspace{\baselineskip}\begin{titlebox*}{Global Functionality \normalfont #1}{myroundbox}{normal}{#2}{#3}}
		{\end{titlebox*}
\end{strip}}
\newenvironment{myprotocolbox*}[3]
{\begin{strip}
		\begin{titlebox*}{Protocol \normalfont #1}{mycommonbox}{normal}{#2}{#3}}
		{\end{titlebox*}
\end{strip}}
\newenvironment{myalgobox*}[3]
{\begin{strip}
		\begin{titlebox*}{Algorithm \normalfont #1}{mycommonbox}{normal}{#2}{#3}}
		{\end{titlebox*}
\end{strip}}
\newenvironment{myreductionbox*}[3]
{\begin{strip}
		\begin{titlebox*}{Reduction \normalfont #1}{mycommonbox}{normal}{#2}{#3}}
		{\end{titlebox*}
\end{strip}}
\newenvironment{mygamebox*}[3]
{\begin{strip}
		\begin{titlebox*}{Game \normalfont #1}{mycommonbox}{gray}{#2}{#3}}
		{\end{titlebox*}
\end{strip}}
\newenvironment{mysimulatorbox*}[3]
{\begin{strip}
		\begin{titlebox*}{Simulator \normalfont #1}{mycommonbox}{normal}{#2}{#3}}
		{\end{titlebox*}
\end{strip}}
\newcommand{\algoHead}[1]{\vspace{0.2em} \underline{\textbf{#1}} \vspace{0.3em}}
\algnewcommand{\ExtendedState}[1]{\State
	\parbox[t]{\dimexpr\linewidth-\ALG@thistlm}{\hangindent=\algorithmicindent\strut\hangafter=3#1\strut}}
\algnewcommand\algorithmicinput{\textbf{Input:}}
\algnewcommand\Input{\item[\algorithmicinput]}
\algrenewcommand{\algorithmiccomment}[1]{{\color{gray}// #1}}
\newcommand{\xmath}[1]{\ensuremath{#1}\xspace}
\newcommand{\Func}[1][\relax]{\xmath{\mathcal{F}_{\textsc{#1}}}}
\newcommand{\ignore}[1]{}
\newcommand{\atkname}{Chameleon}
\begin{document}

\title{Chameleon: Increasing Label-Only Membership Leakage with Adaptive Poisoning}

\author{Harsh Chaudhari, Giorgio Severi, Alina Oprea, Jonathan Ullman 
\\
Khoury College of Computer Science\\
Northeastern University\\
\texttt{\{chaudhari.ha, severi.g, a.oprea, j.ullman\}@northeastern.edu} \\
}

\iclrfinalcopy 

\maketitle

\begin{abstract}
\noindent
The integration of machine learning (ML) in numerous critical applications introduces a range of privacy concerns for individuals who provide their datasets for model training. One such privacy risk is Membership Inference (MI), in which an attacker seeks to determine whether a particular data sample was included in the training dataset of a model. Current state-of-the-art MI attacks capitalize on access to the model’s predicted confidence scores to successfully perform membership inference, and employ data poisoning to further enhance their effectiveness. 
In this work, we  focus on the less explored and more realistic \emph{label-only} setting, where the model provides only the predicted label on a queried sample. We show that existing label-only MI attacks are ineffective at inferring membership in the low False Positive Rate (FPR) regime. To address this challenge,  we propose a new attack \atkname\ that leverages a novel adaptive data poisoning strategy and an efficient query selection method to achieve significantly more accurate membership inference than existing label-only attacks, especially at low FPRs.

 \end{abstract}

\section{Introduction}


The use of machine learning for training on confidential or sensitive data, such as medical records \citep{stanfill2010systematic}, financial documents \citep{finance11}, and conversations \citep{Extract21},  introduces a range of privacy violations. By interacting with a trained ML model, an attacker might reconstruct data from the training set~\citep{NIPS22, Oakland22}, perform membership inference~\citep{shokri2017membership, yeom2018privacy, LiRA}, learn sensitive attributes~\citep{atri15, atr22} or global properties~\citep{Ganju18,DistributionInference22} from training data. Membership inference (MI) attacks \citep{shokri2017membership}, originally introduced under the name of tracing attacks~\citep{Homer+08}, enable an attacker to determine whether or not a data sample was included in the training set of an ML model. While these attacks are less severe than training data reconstruction, they might still constitute a serious privacy violation.  
Consider a mental health clinic that uses an ML model to predict patient treatment responses based on medical histories. An attacker with accesses to a certain individual's medical history can learn if the individual has a mental health condition, by performing a successful MI attack. 

\smallskip
We can categorize  MI attacks into two groups: Confidence-based attacks in which the attacker gets access to the target ML model's predicted confidences, and label-only attacks, in which the attacker only obtains the predicted label on queried samples. Recent literature has primarily focused on confidence-based attacks~\citet{LiRA, bertran2023scalable}  that maximize the attacker's success at low False-Positive Rates (FPRs). 
Additionally,  \citet{TruthSerum} and \citet{MIPoison22}  showed that introducing data poisoning during training significantly improves the MI performance at low FPRs in the confidence-based scenario.

\smallskip
Nevertheless, in many real-world scenarios, organizations that train ML models provide only hard labels to customer queries. For example, financial institutions might solely indicate whether a customer has been granted a home loan or credit card approval. In such situations, launching an MI attack gets considerably more challenging as the attacker looses access to prediction confidences and cannot leverage state-of-the-art attacks such as \citet{LiRA}, \citet{wen2023canary}, \citet{bertran2023scalable}.  
Furthermore, it remains unclear whether existing label-only MI attacks, such as  \citet{choquette-choo21a} and \citet{LO2}, are effective in the low FPR regime and whether data poisoning techniques can be used to amplify the membership leakage in this specific realistic scenario.  




\smallskip
In this paper, we first show that existing label-only MI attacks  \citep{yeom2018privacy, choquette-choo21a, LO2} struggle to achieve high True Positive Rate (TPR) in the low FPR regime. We then demonstrate that integrating state-of-the-art data poisoning technique \citep{TruthSerum} into these label-only MI attacks further degrades their performance, resulting in even lower TPR values at the same FPR. We  investigate the source of this failure and  propose a new label-only MI attack  \atkname\ that leverages a novel \emph{adaptive} poisoning strategy to enhance membership inference leakage in the label-only setting. Our attack also uses an \emph{efficient} querying strategy, which requires only $64$ queries to the target model to succeed in the distinguishing test, unlike prior works \citep{choquette-choo21a, LO2} that use  on the order of a few thousand queries. 
Extensive experimentation across multiple datasets shows that our \atkname~attack consistently outperforms previous label-only MI attacks, with improvements in TPR at 1\% FPR ranging up to $17.5\times$. Finally, we also provide a theoretical analysis that sheds light on how data poisoning amplifies membership leakage in label-only scenario. To the best of our knowledge, this work represents the first analysis on understanding the impact of poisoning on MI attacks.

\section{Background and Threat Model}
\label{sec:background}

We provide background on membership inference, describe our label-only threat model with poisoning, and analyze existing approaches to motivate our new attack. 




\paragraph{Related Work.}
\emph{Membership Inference} attacks can be characterized into different types based on the level of adversarial knowledge required for the attack. 
Full-knowledge (or white-box) attacks \citep{nasr2018comprehensive,leino2020stolen} assume the adversary has access to the internal weights of the model, and therefore the activation values of each layer. 
In black-box settings the adversary can only query the ML model, for instance through an API, which may return either confidence scores or hard labels. 
The confidence setting has been  studied most, with works like \citet{shokri2017membership,LiRA,ye2022enhanced} training multiple shadow models —local surrogate models— and modeling the loss (or logit) distributions for members and non-members. 

The \emph{label-only} MI setting,  investigated by \citet{yeom2018privacy, choquette-choo21a,LO2}, models a more realistic threat models that returns only the predicted label on a queried sample. Designing MI attacks under this threat model is more challenging, as the attack cannot rely  on separating the model's confidence on members and non-members. 
Existing label-only MI attacks are based on analyzing the effects of perturbations on the original point on the model's decision.
With our work we aim to improve the understanding of MI in the label-only setting, especially in light of recent trends in MI literature.

Current MI research, in fact, is shifting the attention towards attacks that achieve high True Positive Rates (TPR) in low False Positive Rates (FPR) regimes \citep{LiRA, ye2022enhanced, LossTraj22, wen2023canary, bertran2023scalable}. 
These recent papers argues that if an attack can manage to reliably breach the privacy of even a small number of, potentially vulnerable, users, it is still extremely relevant, despite resulting in potentially lower average-case success rates.
A second influential research thread exposed the effect that training data poisoning  has on amplifying privacy risks.
This threat model is particularly relevant when the training data is crowd-sourced, or obtained through automated crawling (common for large datasets), as well as  in collaborative learning settings. 
\citet{TruthSerum} and \citet{MIPoison22} showed that data poisoning amplifies MI privacy leakage and increases the TPR values at low FPRs. Both LiRA \citep{LiRA} and Truth Serum \citep{TruthSerum} use a large number of shadow models (typically 128) to learn the distribution of model confidences, but these methods do not directly apply to label-only membership inference, a much more challenging setting.

A related line of research by \citet{PIP} and \citet{SNAP}  showcased how data poisoning could be utilized to amplify the leakage of statistical information about the overall properties of the training set,  called property inference attacks.




\paragraph{Threat Model.} 
We follow the threat model of \citet{TruthSerum} used for membership inference with data poisoning, with adjustments to account for the more realistic label-only  setting. The attacker has black-box query access ---the ability to send samples and obtain the corresponding outputs--- to a trained machine learning model $\tgtmodel$, also called target model, that returns only the predicted label on an input query. The attacker's objective is to determine whether a particular target sample was part of $\tgtmodel$'s training set or not. 
Similarly to \citet{TruthSerum} the attacker $\Adv$  has the capability to inject additional poisoned data $\psndata$ into the training data $\traindata$ sampled from a data distribution $\dist$. The attacker can only inject $\psndata$ once before the training process begins, and the adversary does not participate further in the training process after injecting the poisoned samples. After training completes, the adversary can only interact with the final trained model to obtain predicted labels on selected queried samples. Following the MI literature, $\Adv$ can also train local shadow models with data from the same distribution as the target model's training set. Shadow models training sets may or may not include the challenge points. We will call IN models those trained with the challenge point included in the training set, and OUT models those trained without the challenge point.


\paragraph{Analyzing Existing Approaches.} 
Existing label-only MI attacks \citep{choquette-choo21a,LO2} propose a decision boundary technique that exploits the existence of adversarial examples to create their distinguishing test. These approaches typically require a large number of queries to the target model to estimate a sample's distance to the model decision boundary. However, these attacks  achieve low TPR (e.g., 1.1\%) at 1\% FPR , when tested on the CIFAR-10 dataset. In contrast, the LiRA confidence-based attack by \citet{LiRA} achieves a TPR of 16.2\% at 1\% FPR on the same dataset. Truth Serum \citep{TruthSerum} evaluates LiRA with a data poisoning strategy based on label flipping, which significantly increases TPR to 91.4\% at 1\% FPR   once 8 poisoned samples are inserted per challenge point. 

A natural first strategy for label-only MI with poisoning is to incorporate the Truth Serum  data poisoning method to the existing label-only MI attack \citep{choquette-choo21a} and investigate if the TPR at low FPR can be improved. The Truth Serum poisoning strategy is simply label flipping, where poisoned samples have identical features to the challenge point, but a different label. Surprisingly, the results show a negative outcome, with the TPR decreasing to 0\% at 1\% FPR after poisoning.  This setback compels us to reconsider  the role of data poisoning in improving privacy attacks within the label-only MI threat model.  We question whether data poisoning can indeed improve label-only MI, and if so, why did our attempt to combine the two approaches fail.  In the following section, we  provide comprehensive answers to these questions and present a novel poisoning strategy that significantly improves the attack success in the label-only MI setting.







\section{\cham Attack}

\begin{figure}[t] 
{    \centering
    \includegraphics[width=\textwidth]{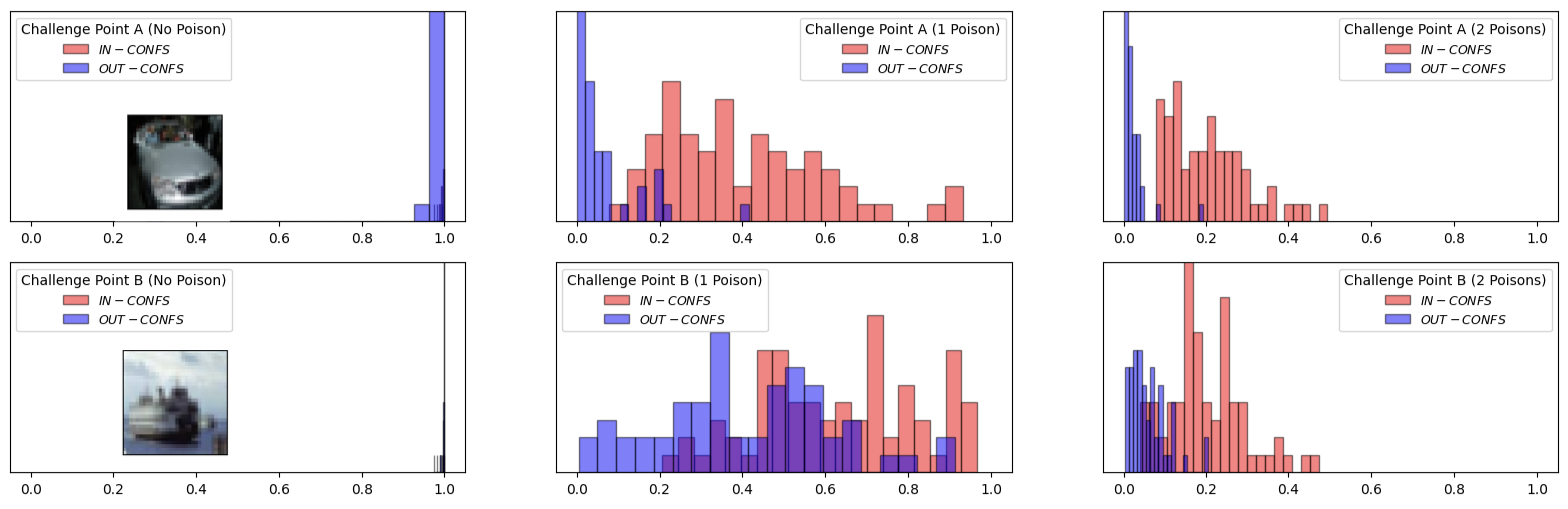}
    \caption{{Distribution of confidence scores for two challenge points (a Car and a Ship), highlighting the impact} of poisoning on two different points of the CIFAR-10 dataset. Each row denotes the shift in model confidences (wrt. true label) for IN and OUT models  with introduction of poisoned samples.}
    \vspace{-2mm}
    \label{fig:AP-Conf}
}    
\end{figure}

We first provide some key insights for our attack, then describe the detailed attack procedure, and include some analysis on leakage under MI. 
\subsection{Attack Intuition} 
\label{sec:Intuit}

Given the threat model, our main objective is to improve the TPR in the \emph{low} FPR regime for label-only MI,  while reducing the number of queries to the target model $\tgtmodel$. To achieve this two-fold objective, we start by addressing the fundamental question of determining an effective poisoning strategy. This involves striking the right balance such that the IN models, those trained with the target point included in the training set, classify the point correctly, and the OUT models, trained without the target point, misclassify it. Without any poisoning, it is likely that both IN and OUT models will classify the point correctly (up to some small training and generalization error). On the other hand, if we insert too many poisoned samples with an incorrect label, then both IN and OUT models will mis-classify the point to the incorrect label. As the attacker only gets access to the labels of the queried samples from the target model $M_t$, over-poisoning would make it implausible to distinguish whether the model is an  IN or OUT model. 

The state-of-the-art Truth Serum attack \citep{TruthSerum}, which requires access to model confidences, employs a static poisoning strategy by adding a fixed set of  $k$ poisoned replicas for each challenge point. 
This poisoning strategy  fails for label-only MI as often times both IN and OUT models misclassify the target sample, 
as discussed in Section \ref{sec:background}. 
Our \emph{crucial} observation is that not all challenge points require the same number of poisoned replicas to create a separation between IN and OUT models.  To provide evidence for this insight, we show a visual illustration of model confidences under the same number of poisoned replicas for two challenge points in Figure \ref{fig:AP-Conf}.
%
Therefore, we propose a new strategy that adaptively selects the number of poisoned replicas for each challenge point, with the goal of creating a separation between IN and OUT models. The IN models trained with the challenge point in the training set should classify the point correctly, while the OUT models should misclassify it. Our strategy adaptively adds poisoned replicas until the OUT models consistently misclassify the challenge point at a significantly higher rate than the IN models.


Existing MI attacks with poisoning \citep{TruthSerum,MIPoison22} utilize confidence scores obtained from the trained model to build a distinguishing test. As our attacker only obtains the predicted labels, we develop a label-only ``proxy'' metric for estimating the model's confidence, by leveraging the predictions obtained on ``close'' neighbors of the challenge point. 
We introduce the concept of a \emph{membership neighborhood}, which is constructed by selecting the closest neighbors based on the KL divergence computed on model confidences. This systematic selection helps us improve the effectiveness of our attack by strategically incorporating only the relevant neighbor predictions. The final component of the attack is the distinguishing test,
in which we compute a score based on the target model $\tgtmodel$'s correct predictions on the membership neighborhood set. These scores are  used to compute the TPR at fixed FPR values, as well as other metrics of interest such as AUC and MI accuracy. 
%
%
We provide a detailed description for each stage of our attack below.

\subsection{Attack Details} 
\label{sec:attack_details}
Our \cham attack can be described as a three-stage process:


 \begin{wrapfigure}{r}{0.33\textwidth} 
     \centering
     \includegraphics[width=0.33\textwidth]{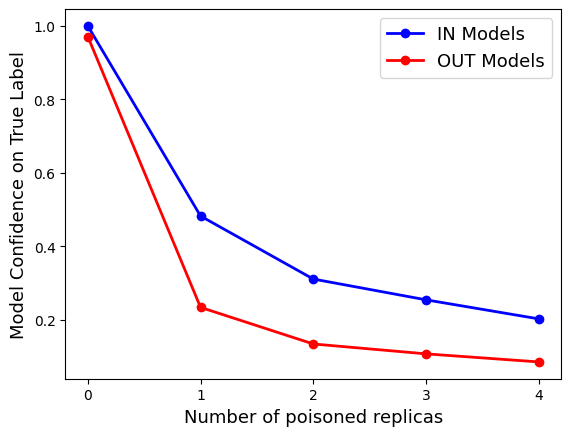}
     \caption{Impact of poisoning on  confidences of IN and OUT models (wrt. the true label) for a challenge point in CIFAR-10 dataset.}
     \vspace{-2mm}
     \label{fig:InOutConfs}    
 \end{wrapfigure}

\paragraph{Adaptive Poisoning.}
Given a challenge point $(x,y)$ and access to the underlying training distribution $\dist$, the attacker constructs a training dataset $\advdata \sim \dist$, such that $(x,y) \notin \advdata$ and  a poisoned replica $(x,y')$, for some label $y' \neq y$.
%
%
The goal of the attacker is to construct a small enough poisoned set $\psndata$ such that a model trained on $\advdata \cup \psndata$, which \emph{excludes} $(x,y)$, missclassifies the challenge point. The attacker needs to train their own OUT shadow models (without the challenge point) to determine how many poisoned replicas are enough to  mis-classify the challenge point. Using a set of $m$ OUT models instead of a single one increases the chance that any other OUT model (e.g., the target model $M_t$) has a similar behavior under poisoning. The attacker begins with no poisoned replicas and trains $m$ shadow models on the training set $\advdata$. The attacker adds a poisoned replica if the average confidence across the OUT models on label $y$ is above a threshold $\psnthresh$, and repeats the process until the models' average confidence on label $y$ falls below  $\psnthresh$ (threshold where mis-classification for the challenge point occurs). The details of our adaptive poisoning strategy are outlined in Algorithm \ref{alg:adp_psn}, which describes the iterative procedure for constructing the poisoned set. 


Note that we  \emph{do not} need to separately train any IN models, i.e., models trained on $\psndata \cup \advdata \cup \{(x,y)\}$, to select the number of poisoned replicas for our challenge point. This is due to our observation that, in presence of poisoning, the average confidence for the true label $y$ tends to be higher on the IN models when compared to the OUT models. 
Figure \ref{fig:InOutConfs} illustrates an instance of this phenomenon, where the average confidence on the OUT models decreases at a faster rate than the confidence on the IN models with the addition of more poisoned replicas.
As a result, the confidence mean computed on the OUT models (line 6 in Algorithm \ref{alg:adp_psn}) will always cross the poisoning threshold $\psnthresh$ first, leading to  misclassification of the challenge point by the OUT models before the IN models. Therefore, we are only required to train OUT models for our adaptive poisoning strategy.

\begin{algorithm}[h]
	\begin{algorithmic}
		
        \smallskip
		\State {\bf Input:} Challenge point $(x,y)$, poisoned point $(x, y')$ where $y' \neq y$, attacker's dataset $\advdata$, poison threshold $\psnthresh$ and maximum poisoned iterations $\mathsf{k_{max}}$.

        \smallskip
        \State 1: Let $k$ denote the number of poisoned replicas.

       \State 2:  {\bf For} $k = 0,\ldots, \mathsf{k_{max}}$ {\bf do:}
       

            \State 3:\indent Construct poisoned dataset $\psndata$ containing $k$ replicas of $(x, y')$.
  
		\State   4:\indent Train $m$ OUT models $\{\theta^{\text{out}}_1,\ldots, \theta^{\text{out}}_m\}$ on  dataset $ \psndata  \cup \advdata$. 
		
		\State 5:\indent Query $x$ on OUT models and obtain confidences ${c^y_1,\ldots, c^y_m}$ for label $y$, where $0 \leq c^y_i \leq 1$.

            \State 6:\indent Compute  mean of the confidences $\mu = \frac{\sum_{i=1}^{m} c^y_i}{m}$.
  
		\State  7:\indent {\bf If $\mu \leq \psnthresh$:} 

            \State 8:\indent \indent {\bf break}

            \State 9:\indent $k = k+1$

        \smallskip
        \State {\bf Output:} Number of poisoned replicas $k$.    
		
	\end{algorithmic}
	\caption{Adaptive Poisoning Strategy}
	\label{alg:adp_psn}
\end{algorithm}

In practical scenarios, an attacker would aim to infer membership across multiple challenge points rather than focusing on a single point.  Later in \Cref{sec:MultPoint}, we propose a strategy that handles a set of challenge points simultaneously while naturally capturing interactions among them during the poisoning phase. Importantly, our strategy only incurs a fixed overhead cost, enabling the attacker to scale to any number of challenge points.  

\paragraph{Membership Neighborhood.} 
In this stage, the attacker's objective is to create a membership neighborhood set $\nbrset{(x,y)}$ by selecting close neighboring points to the challenge point. This set is then used to compute a proxy score in order to build a distinguishing test.
%
%
To construct the neighborhood set, the attacker needs $N$ shadow models such that the challenge point $(x,y)$ appears in the training set of half of them (IN models), and not in the other half (OUT models). Interestingly, the attacker can reuse the OUT models trained from the previous stage and reduce the computational cost of the process. 
Using these shadow models, the attacker constructs the neighborhood set $\nbrset{(x,y)}$ for a given challenge point $(x, y)$. A candidate $(x_c,y)$, where $x_c \neq x$, is said to be in set $\nbrset{(x,y)}$, if the original point and the candidate's model confidences are close in terms of KL divergence for both IN and OUT models, i.e., the following conditions are satisfied:
\begin{align}
 \label{eq:neighbor}
 \mathsf{KL}(~\Phi(x_c)_{\textsf{IN}} ~|| ~\Phi(x)_{\textsf{IN}}~) \leq  \nbrthresh
 ~\mathsf{and}~ \mathsf{KL}(~\Phi(x_c)_{\textsf{OUT}}~||~ \Phi(x)_{\textsf{OUT}}~) \leq  \nbrthresh
\end{align}
Here, $\mathsf{KL}()$ calculates the Kullback-Leibler divergence between two distributions. Notations $\Phi(x_c)_{\textsf{IN}}$ and $\Phi(x_c)_{\textsf{OUT}}$ represent the distribution of confidences (wrt. label $y$) for candidate $(x_c, y)$ on the IN and OUT models trained with respect  to challenge point $(x,y)$. 

Note that the models used in this stage do not need to include poisoning into their training data. We observe that the distribution of confidence values for candidates characterized by low KL divergence tend to undergo similar changes as those of the challenge point when poisoning is introduced.
We call such candidates  \emph{close neighbors}. In Figure \ref{fig:Mem-Nbd}, we show how the confidence distribution of a close neighbor closely mimics the confidence distribution of the challenge point as we add  two poisoned replicas. Additionally, we also demonstrate that the confidence distribution of a \emph{remote neighbor} is hardly affected by the addition of  poisoned replicas and does not exhibit a similar shift in its confidence distribution as the challenge point. Therefore, it is enough to train shadow models without poisoning, which reduces the time complexity of the attack.

\begin{figure}[h] 
{    \centering
    \includegraphics[width=\textwidth]{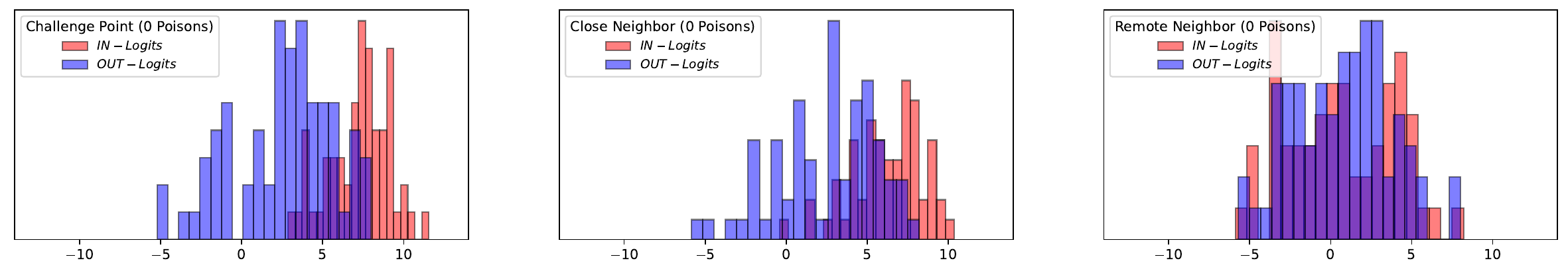}
    \includegraphics[width=\textwidth]{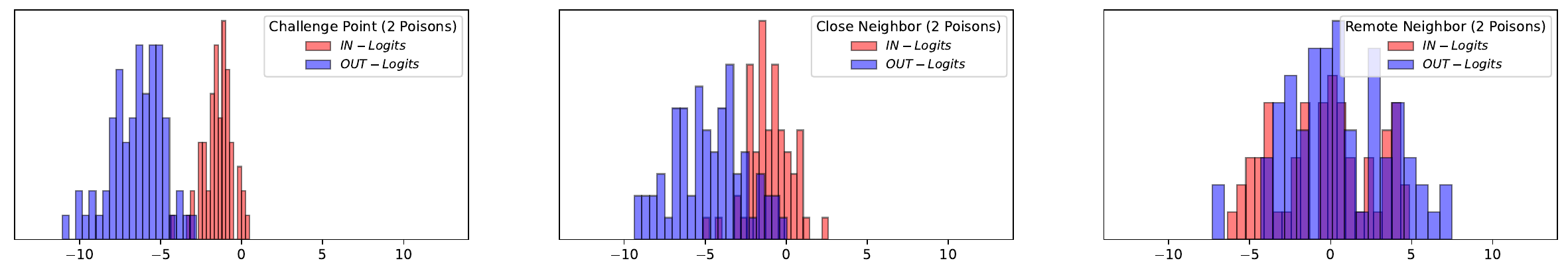}
    \caption{Effect of poisoning on the scaled confidence (logit) distribution of a challenge point and its neighbors. Both the IN and OUT distributions of the near neighbor, unlike the far-away neighbor, exhibit a behavior similar to the challenge point distribution before and after introduction of poisoning.}
    \vspace{-2mm}
    \label{fig:Mem-Nbd}
}    
\end{figure}

In practical implementation, we approximate the distributions $\Phi(x_c)_{\textsf{IN}}$ and $\Phi(x_c)_{\textsf{OUT}}$ using a scaled version of confidences known as logits. Previous work \citep{LiRA,TruthSerum}   showed that logits exhibit a Gaussian distribution, and therefore we  compute the KL divergence between the challenge point and the candidate confidences using Gaussians.  In Section \ref{sec:Ablation}, we empirically show the importance of selecting close neighbors.

\paragraph{Distinguishing Test.}
The final goal of the attacker is to perform the distinguishing test. Towards this objective, the attacker queries the black-box trained model $M$ using the challenge point and its neighborhood set $\nbrset{(x,y)}$ consisting of $n$ close neighbors.
%
The attacker obtains a set of predicted labels $\{\hat{y}_1,\ldots,\hat{y}_{n+1}\}$ in return and computes the missclassification score of the trained model $f(x)_y =\frac{\sum_{i=1}^{n+1} \hat{y_i} \neq y} {n+1}$. The score $f(x)_y$ denotes the fraction of neighbors whose predicted labels do not match the ground truth label. This score is then used to predict if the challenge point was a part of the training set or not. Correspondingly, we use the computed misclassification score $f(x)_y$ to calculate various metrics, including TPR@ fixed FPR, AUC, and MI accuracy.

%
 
\subsection{ Label-Only MI Analysis} \label{sec:Analysis}

\begin{wrapfigure}{r}{0.4\textwidth} 
    \centering
    \vspace{-3mm}
    \includegraphics[width=0.4\textwidth]{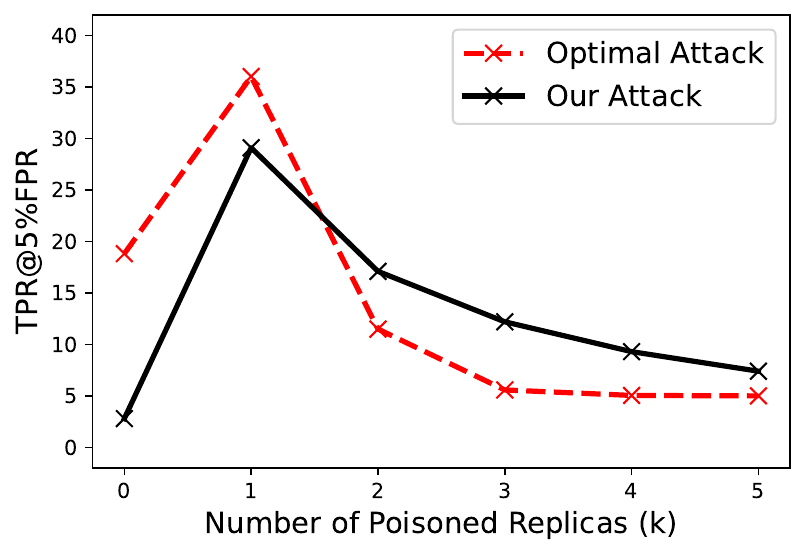}
    \caption{Comparing  Theoretical and Practical attack under poisoning.}
    \vspace{-2mm}
    \label{fig:Theoretical_TPR}    
\end{wrapfigure}

We now analyze the impact of poisoning on MI leakage in the label-only setting.
We construct an \emph{optimal} attack that maximizes the True Positive Rate (TPR) at a fixed FPR of $x\%$, when $k$ poisoned replicas related to a challenge point are introduced in the training set. 
The formulation of this optimal attack is based on a list of assumptions outlined in Appendix \ref{apdx:MIAnalysis}. The objective of constructing this optimal attack is twofold.
First, we aim to examine how increasing the number of poisoned replicas influences the maximum TPR (@$x\%$FPR). Second, we seek to evaluate whether the behavior of \atkname\ aligns with (or diverges from) the behavior exhibited by the optimal attack.
In Figure \ref{fig:Theoretical_TPR}, we present both attacks at a FPR of 5\% on CIFAR-10. The plot depicting the optimal attack shows an initial increase in the maximum attainable TPR following the introduction of poisoning. However, as the number of poisoned replicas increases, the TPR decreases, indicating that excessive poisoning in the label-only scenario adversely impacts the attack TPR. Notably, \atkname\ exhibits a comparable trend to the optimal attack, showing first an increase and successively a decline in TPR as poisoning increases. This alignment suggests that our attack closely mimics the behavior of the optimal attack and has a similar decline in TPR due to overpoisoning. The details of our underlying assumptions and the optimal attack are given in Appendix \ref{apdx:MIAnalysis}. 
\section{Handling Multiple Challenge Points} \label{sec:MultPoint}

\begin{algorithm}[t]
	\begin{algorithmic}
		\smallskip
		\State {\bf Input:} Dataset $\advdata$, set of challenge points $S_c = \{(x_1,y_1), \ldots, (x_n,y_n)\} \subset \advdata$, set of poisoned points $ S_p = \{(x_1, y'_1),\ldots,(x_n,y'_n)\}$ such that  $ \forall_{i \in [1,n]} y'_i \neq y_i$,
        poison threshold $\psnthresh$ and maximum iterations $\mathsf{k_{max}}$, $m$ number of OUT models to train.
            
            \smallskip
            \State ~~1: Set of poisoned replicas counters $S_k = \{k_1 = 0, \ldots, k_n =0\}$. 
            \smallskip
            \State ~~2: Initialize a set $\{b_1=0,\ldots, b_n = 0 \}$ to keep track of break condition.
		\smallskip
            \State ~~3: Construct subsets $\textsf{D}_i$, with $i \in [1, 2m]$, by randomly sampling half of $\advdata$.
            
		\smallskip
            \State ~~4:  {\bf For} $k = 0,\ldots, \mathsf{k_{max}}$ {\bf do:}
       
            \smallskip
            \State ~~5:\indent Construct poisoned dataset $\psndata$ such that $ \forall_{i \in [1,n]}$ $\psndata$ contains $k_i$ replicas of $(x_i, y'_i)$.

            \smallskip
		\State ~~6:\indent Train $2m$ models $\{\theta_1,\ldots, \theta_{2m}\}$ on  dataset  $ \psndata  \cup \textsf{D}_{i \in [1, 2m]}$
  
            \smallskip
            \State ~~7:\indent {\bf For} $i = 1,\ldots,n$ {\bf do:}

            \smallskip
            \State ~~8:\indent\indent OUT $\leftarrow$ subset of $m$ models whose training data did not include challenge point $i$

            \smallskip
		\State ~~9:\indent\indent Query $x_i$ on the OUT models and obtain model confidences $\{c^{y_i}_1,\ldots, c^{y_i}_m\}$
            
            \State ~~10:\indent\indent Compute the mean $\mu_i = \frac{\sum_{j=1}^{m} c^{y_i}_j}{m}$.
  
            \smallskip
		\State ~~11:\indent\indent {\bf If $\mu_i \leq \psnthresh$:} 

            \smallskip
            \State 12:\indent \indent\indent Set $b_i = 1$.

            \State 13:\indent\indent {\bf Else:}

            \smallskip
            \State 14:\indent \indent\indent  $k_i = k_i+1$.

            \smallskip
            \State  15:\indent {\bf If $\sum_{i=1}^{n} b_i = n$ :} 

            \smallskip
            \State 16:\indent \indent {\bf break}


        \smallskip
        \State {\bf Output:} Set of number of poisoned replicas $S_k$.    
		
	\end{algorithmic}
	\caption{Adaptive poisoning strategy on a set of challenge points}
	\label{alg:adp_psn_set}
\end{algorithm}

\paragraph{Adaptive Poisoning Strategy.} {
We now explore how to extend Algorithm \ref{alg:adp_psn} from Section \ref{sec:Intuit} to infer membership on a set of $n$ challenge points. One straightforward extension is applying Algorithm \ref{alg:adp_psn} separately to each of the $n$ challenge points, but there are several drawbacks to this approach. First, the number of  shadow models  grows proportionally to the number of challenge points $n$, rapidly increasing the cost and impracticality of our method. Second, this method would introduce poisoned replicas by analyzing each challenge point independently, overlooking the potential influence of the presence or absence of other challenge points and their associated poisoned replicas in the training dataset.} 

{Consequently, we propose Algorithm \ref{alg:adp_psn_set}, which operates over multiple challenge points by training a fixed number of shadow models. In Algorithm \ref{alg:adp_psn_set} (Step 3), we start with constructing $2m$ subsets by randomly sampling half of the original training set. These subsets provide various combinations of challenge points, depending on their presence or absence in the subset, helping us tackle the second drawback.  We then use these $2m$ subsets to iteratively construct the  poisoned set $\psndata$ and train $2m$ shadow models per iteration (Steps 4-16).  Thus, the total number of shadow models trained over the course of Algorithm \ref{alg:adp_psn_set} is $2(\mathsf{k_{max}}+1)m$, where $m$ and $\mathsf{k_{max}}$ are hyperparameters chosen by the attacker and not dependent on the number of challenge points $n$.  In fact, the cost of Algorithm \ref{alg:adp_psn_set} is only  a constant factor $2\times$ higher than Algorithm \ref{alg:adp_psn} that was originally designed for a single challenge point.
Later in Appendix \ref{apndx:CostAnalysis} we analyze the impact on our attack's success by varying hyperparameters $m$ and $\mathsf{k_{max}}$.}

Later, in \Cref{apndx:CostAnalysis}, we perform a detailed analysis on the computational cost of our attack by varying the parameters $\mathsf{k_{max}}$ and $m$ to examine their impact on our attack's success.


\paragraph{Membership Neighborhood.}
Recall that, to construct the membership neighborhood for each challenge point, the attacker needed both IN and OUT shadow models specific to that challenge point. However by design of Algorithm \ref{alg:adp_psn_set}, we can  now repurpose the $2m$ models ($m$ IN and $m$ OUT) trained during the adaptive poisoning stage to build the neighborhood. Consequently, there is no need to train any additional shadow models, making this stage very efficient.

\section{Experiments} \label{sec:experiments}
We  show that  \cham  significantly improves upon prior label-only MI, then we perform several ablation studies, and finally we evaluate if differential privacy (DP) is an effective mitigation. 
%
%

\subsection{Experimental Setting} \label{sec:exp-setup}
We perform experiments on four different datasets: three computer vision datasets (GTSRB, CIFAR-10 and CIFAR-100) and one tabular dataset (Purchase-100).  
We use a ResNet-18 convolutional neural network model for the vision datasets.  We follow the standard training procedure used in prior works \citep{LiRA,TruthSerum,wen2023canary}, including weight decay and common data augmentations for image datasets, such as random image flips and crops. Each model is trained for 100 epochs, and its training set is constructed by randomly selecting 50\% of the original training set. 

To instantiate our attack, we pick 500 challenge points at random from the original training set. In the adaptive poisoning stage, we set the poisoning threshold $\psnthresh = 0.15$, the number of OUT models $m  = 8$ and the number of maximum poisoned iterations $\mathsf{k_{max}} = 6$. In the membership neighborhood stage, we set the neighborhood threshold $\nbrthresh = 0.75$, and the size of the neighborhood $\abs{\nbrset{(x,y)}} = 64$ samples. Later in Section \ref{sec:Ablation}, we vary these parameters and explain the rationale behind selecting these values. 
To construct neighbors in the membership neighborhood, we generate a set of random augmentations for images and select a subset of $64$ augmentations that satisfy Eqn. (\ref{eq:neighbor}).
Finally we test our attack on $64$ target models,  trained using the same  procedure,  including the poisoned set. Among these, 32 serve as IN models, and the remainder as OUT models, in relation to each challenge point. Therefore, the evaluation metrics used for comparison are computed over 32,000 observations.

\paragraph{Evaluation Metrics.}
Consistent with prior work \citep{LiRA,MIPoison22,TruthSerum, wen2023canary}, our evaluation primarily focuses on True Positive Rate (TPR) at various  False Positive Rates (FPRs) namely 0.1\%, 1\%, 5\% and 10\%. 
To provide a comprehensive analysis, we also include the AUC (Area Under the Curve) score of the ROC (Receiver Operating Characteristic) curve and  the Membership Inference (MI) accuracy when comparing our attack with prior label-only attacks \citep{yeom2018privacy, choquette-choo21a, LO2}. 

\subsection{\cham attack improves Label-Only MI}

We compare our attack against two prior label-only attacks: the Gap attack \citep{yeom2018privacy}, which predicts any misclassified data point as a non-member and the state-of-the-art Decision-Boundary attack \citep{choquette-choo21a,LO2}, which uses a sample's distance from the decision boundary to  determine its membership status.  The Decision-Boundary attack relies on black-box adversarial example attacks \citep{Brendel18,HopSkipJump20}. Given a challenge point $(x,y)$, the attack starts from a random point $x'$ for which the model's prediction is \emph{not} label $y$ and walks along the boundary while minimizing the distance to $x$. The perturbation needed to create the adversarial example estimates the  distance to the decision boundary, and a sample is considered to be in the training set if the estimated distance is above a threshold, and outside the training set otherwise. \citet{choquette-choo21a} showed that their process closely approximates results obtained with a stronger white-box adversarial example technique~\citep{CW18} using $\approx$ 2,500 queries per challenge point. Consequently, we directly compare with the stronger white-box version and show that our attack outperforms even this upper bound.

{\begin{table}[h!]
		\centering 
            \caption{ {\bf Comparing Label-only attacks} on GTSRB (G-43), CIFAR-10 (C-10) and CIFAR-100 (C-100) datasets. Our attack achieves high TPR across various FPR values compared to prior attacks.} \label{tab:main_res}
		\begin{adjustbox}{max width=\textwidth}{  
				\begin{tabular}{l c c c r r r r r r r r r} 
					
				
                     & \multicolumn{3}{c}{\bf TPR@0.1\%FPR} & \multicolumn{3}{c}{\bf TPR@1\%FPR}  & \multicolumn{3}{c}{\bf TPR@5\%FPR} & \multicolumn{3}{c}{\bf TPR@10\%FPR}\\ 

					 \cmidrule(lr){2-4} \cmidrule(lr){5-7} \cmidrule(lr){8-10} \cmidrule(lr){11-13}
                    
                    {\bf Label-Only Attack} & G-43 & C-10 & C-100 & G-43 & C-10 & C-100 & G-43 & C-10 & C-100 & G-43  & C-10 & C-100 \\

                    \midrule

                     Gap  & 0.0\% & 0.0\% & 0.0\% &  0.0\% & 0.0\% & 0.0\% & 0.0\% & 0.0\% & 0.0\%  & 0.0\% & 0.0\% & 0.0\% \\

                      Decision-Boundary & 0.04\% & 0.08\% & 0.02\%  & 1.1\% & 1.3\% & 3.6\% &  5.4\% & 5.6\% & 23.0\% & 10.4\% & 11.6\% & 44.9\%  \\

                      \midrule
                      
                      {\bf \atkname~(Ours)}  & {\bf 3.1\%} & {\bf 8.3\%} & {\bf 29.6\%} & {\bf 11.4\%} & {\bf 22.8\%} & {\bf 52.5\%} &  {\bf 25.9\%} & {\bf 34.7\%} & {\bf 70.9\%} & {\bf 35.0\%} & {\bf 42.8\%} & {\bf 79.4\%}\\

                    



				\end{tabular}
			}
		\end{adjustbox}
	\end{table}
}

Table \ref{tab:main_res} provides a detailed comparison of \atkname~with prior label-only MI attacks.
\atkname~shows a significant TPR improvement over all FPRs compared to prior works. In particular, for the case of  TPR at $0.1\%$ FPR, prior works achieve TPR values below $0.08\%$, but \cham achieves TPR values ranging from $3.1\%$ to $29.6\%$ across the three datasets, marking a substantial improvement ranging from $77.5\times$ to $370\times$. At $1\%$ FPR, the TPR improves by a factor between $10.36\times$ and $17.53\times$.  Additionally, our attack consistently surpasses prior methods in terms of AUC and MI accuracy metrics. A detailed comparison can be found in Table \ref{tab:MIAUC_res} (Appendix \ref{apndx:MIandAUC}).  Notably, \atkname~is significantly more query-efficient, using only $64$ queries to the target model, compared to the decision-boundary attack, which requires $\approx$ 2,500 queries for the MI test, making our attack approximately $39\times$ more query-efficient.


Furthermore, \atkname~requires adding a relatively low number of poisoned points per challenge point: an average of $3.5$ for GTSRB, $1.4$ for CIFAR-10, and $0.6$ for CIFAR-100 datasets for each challenge point.
This results in a minor drop in test accuracy of less than $2\%$, highlighting the stealthiness of our attack. Overall, the results presented in Table \ref{tab:main_res} show that  our adaptive data poisoning strategy  significantly amplifies the MI leakage in the label-only scenario while having a marginal effect on the model's test accuracy.


\subsection{Ablation Studies} \label{sec:Ablation}

We perform several ablation studies, exploring the effect of the parameters discussed in Section \ref{sec:exp-setup}.

\begin{figure*}[h]{
        \centering
        
		\begin{subfigure}[b]{0.4\textwidth}
			\includegraphics[width= \textwidth]{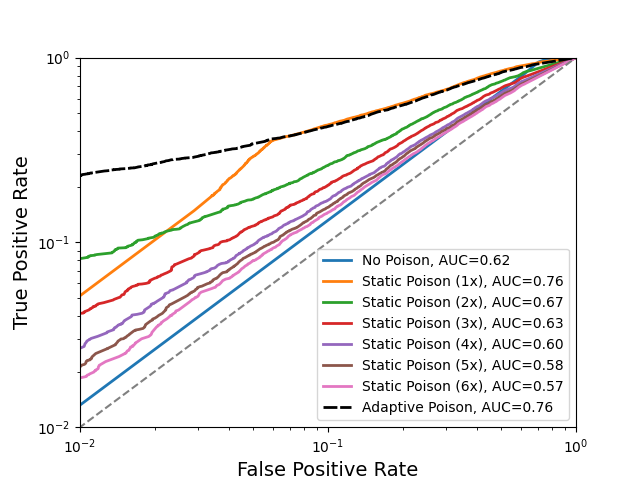}%
			
			\caption{\centering 
			Comparing  adaptive and static poisoning.}
			\label{fig:Ab-AP}
		\end{subfigure}
    	\begin{subfigure}[b]{0.4\textwidth}
                \includegraphics[width=\textwidth]{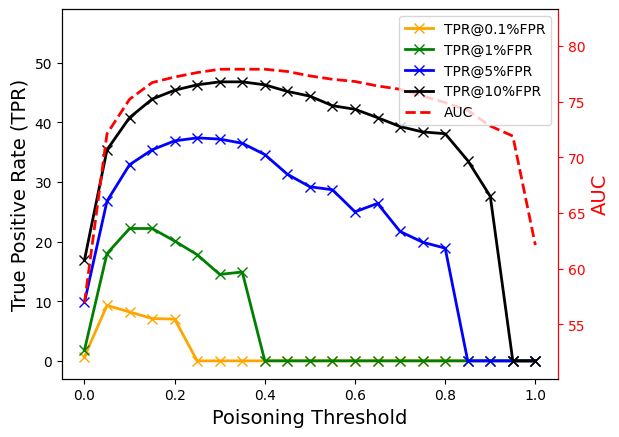}%
			\caption{ \centering TPR/AUC by poisoning threshold $\psnthresh$.}\label{fig:Ab-PT}
		\end{subfigure}		
        	
    	\begin{subfigure}[b]{0.4\textwidth}
			\includegraphics[width=\textwidth]{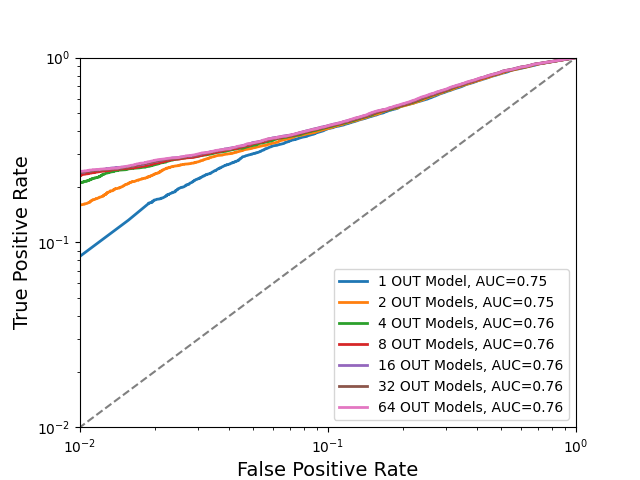}%
			\caption{ \centering TPR/AUC by number of OUT models.}\label{fig:Ab-OUT}
		\end{subfigure}		
            \begin{subfigure}[b]{0.4\textwidth}
			\includegraphics[width=\textwidth]{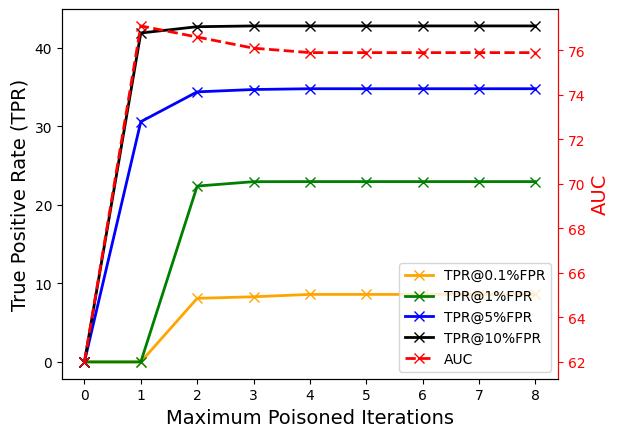}%
			\caption{ \centering TPR/AUC by number of iterations $\mathsf{k_{max}}$.}\label{fig:Ab-MaxRep}
		\end{subfigure}		
        \caption{{\bf Ablations for Adaptive Poisoning stage on CIFAR-10 dataset.} We provide experiments by varying vaious hyperparameters used in the Adaptive Poisoning stage.
        }
  
}
\end{figure*}

\paragraph{Adaptive Poisoning Stage.}
We evaluate the effectiveness of adaptive poisoning and the impact of several parameters.

\medskip
\noindent
\emph{a) Comparison to Static Poisoning.} Figure \ref{fig:Ab-AP} provides a comparison of our adaptive poisoning approach (Algorithm \ref{alg:adp_psn}) and a static approach where $k$ replicas are added per challenge point. Our approach achieves a TPR@$1\%$ FPR of $22.9\%$, while the best static approach among the six versions achieve a TPR@$1\%$ FPR of $8.1\%$. 
The performance improvement of $14.8\%$ in this metric demonstrates the effectiveness of our adaptive poisoning strategy over static poisoning, a strategy used in Truth Serum for confidence-based MI. Additionally, our approach matches the best static approach (with 1 poison) for the AUC metric, achieving an AUC of $76\%$.
%

\medskip
\noindent
\emph{b) Poisoning Threshold.} Figure \ref{fig:Ab-PT} illustrates the impact of varying the poisoning threshold $\psnthresh$ (line 7 of Algorithm \ref{alg:adp_psn}) on the attack's performance. When $\psnthresh = 1$, the OUT model's confidence is at most 1 and  Algorithm \ref{alg:adp_psn} will not add any poisoned replicas in the training data, but setting $\psnthresh < 1$ allows the algorithm to introduce poisoning. We observe an immediate improvement in the AUC, indicating the benefits of poisoning over the no-poisoning scenario. Similarly, the TPR improves when $\psnthresh$ decreases, as this forces the OUT model's confidence on the true label to be low, increasing the probability of missclassification. However, setting $\psnthresh$ close to $0$ leads to overpoisoning, where an overly restrictive $\psnthresh$ forces the algorithm to add a large number of poisoned replicas in the training data,  negatively impacting the attack's performance. Therefore, setting  $\psnthresh$ between $0.1$ and $0.25$  results in high TPR. 


\medskip
\noindent
\emph{ c) Number of OUT Models.} Figure \ref{fig:Ab-OUT} shows the impact of the number of OUT models (line 5 of Algorithm \ref{alg:adp_psn}) on our attack's performance. As we increase the number of OUT models from $1$ to $8$, the TPR@$1\%$ FPR shows a noticeable improvement from $13.1\%$ to $22.9\%$. However, further increasing the number of OUT models up to $64$ only yields a marginal improvement with TPR increasing to $24.1\%$. Therefore, setting the number of OUT models to $8$ strikes a balance between the attack's success and the computational overhead of training these models.

\medskip
\noindent
\emph {d) Maximum Poisoned Iterations.} In Figure \ref{fig:Ab-MaxRep}, we observe that increasing the maximum number of poisoned iterations $\mathsf{k_{max}}$ (line 2 of Algorithm \ref{alg:adp_psn}) leads to significant improvements in  both the TPR and AUC metrics. When $\mathsf{k_{max}} = 0$, it corresponds to the no poisoning case.
The TPR@1\% FPR and AUC metrics improve as parameter $\mathsf{k_{max}}$ increases. However, the TPR stabilizes when $\mathsf{k_{max}} \geq 4$, indicating that no more than 4 poisoned replicas are required per challenge point. 

\paragraph{Membership Neighborhood Stage.} \label{sec:exp-mem_nbrhood}
Next, we explore the impact of varying the the membership neighborhood size and the neighborhood threshold $\nbrthresh$ individually. For the neighborhood size, we observe a consistent increase in TPR@1\% FPR of 0.6\% as we increase the number of queries from $16$ to $64$, beyond which the TPR oscillates. Thus, we set the neighborhood size at 64 queries for our experiments, achieving satisfactory attack success.
For the neighborhood threshold parameter $\nbrthresh$, we note a decrease in TPR@1\% FPR of 6.2\% as we increase $\nbrthresh$ from 0.25 to 1.75. This aligns with our intuition, that setting $\nbrthresh$ to a smaller value prompts our algorithm to select close neighbors, which in turn enhances our attack's performance. The details for these results are in Appendix \ref{apndx:MemNbrhood}.

\subsection{Other Data Modalities and Architectures} \label{sec: DMandA}

To show the generality of our attack, we evaluate \atkname\ on various model architectures,  including ResNet-34, ResNet-50 and VGG-16. We observe a similar trend of high TPR value at various FPRs. Particularly for VGG-16, which has $12\times$ more trainable parameters than ResNet-18, the attack achieves better  performance than ResNet-18 across all metrics, suggesting that more complex models tend to be more susceptible to privacy leakage.
We also evaluate \atkname~against a tabular dataset (Purchase-100). Once again, we find a similar pattern of consistently high TPR values across diverse FPR thresholds. In fact, the TPR@1\% FPR metric reaches an impressive 45.8\% when tested on a two-layered neural network. The details for the experimental setup and the results can be found in  Appendix \ref{apndx:DMA}.


\subsection{Does Differential Privacy Mitigate \cham?}
We evaluate the resilience of \atkname~against models trained using a standard differentially private (DP) training algorithm, DP-SGD \citep{Abadi16}.  Our evaluation covers a broad spectrum of privacy parameters, but here we highlight results on $\epsilon = \{ \infty, 100, 4\}$, which  represent no bound, a loose bound and a strict bound on the privacy. At $\epsilon$ as high as 100, we observe a decline in \atkname's performance, with TPR@1\% FPR decreasing from 22.6\% (at $\epsilon = \infty$) to 6.1\%. Notably, in the case of an even stricter $\epsilon = 4$,  we observe that TPR@1\% FPR becomes 0\%, making our attack ineffective. However, it is also important to note that the model's accuracy also takes a significant hit, plummeting from 84.3\% at $\epsilon = \infty$ to 49.4\% at $\epsilon=4$, causing a substantial 34.9\% decrease in accuracy. This trade-off shows that while DP serves as a powerful defense, it does come at the expense of model utility. More comprehensive results using a wider range of $\epsilon$ values can be found in  Appendix \ref{apndx: DP}.

\section{Discussion and Conclusion}


In this work we propose a new attack that successfully amplifies Membership Inference leakage in the Label-Only setting. Our attack leverages a novel adaptive poisoning and querying strategy, surpassing the effectiveness of prior label-only attacks.  Furthermore, we investigate the viability of Differential Privacy as a defense against our attack, considering its impact on model utility. Finally, we offer a theoretical analysis providing insights on the impact of data poisoning on MI leakage. 

\medskip
We demonstrated that \atkname\ achieves impressive performance in our experiments, mainly due to the adaptive poisoning strategy we design. While poisoning is a core component of our approach, and allows us to enhance the privacy leakage, it also imposes additional burden on the adversary to mount the poisoning attack. 
One important remaining open problem in label-only MI attacks is how to operate effectively in the low False Positive Rate (FPR) scenario without the assistance of poisoning. 
Additionally, our poisoning strategy requires training shadow models. 
Though our approach generally involves a low number of shadow models,  any training operation is inherently expensive and adds computational complexity. 
An interesting direction for future work is the design of poisoning strategies for label-only membership inference that do not require shadow model training.

\section*{Acknowledgements}
\noindent
We thank Sushant Agarwal and John Abascal for helpful discussions.  Alina Oprea was supported by NSF awards CNS-2120603 and CNS-2247484. Jonathan Ullman was supported by NSF awards CNS-2120603, CNS-2232692, and CNS-2247484.


\ignore{
\section*{Ethics Statement}
Our work introduces a new membership inference methodology, that could in principle be used to exacerbate the privacy risks to individuals' private data when included in machine learning training sets.
The main reason motivating us to investigate this type of attacks is to shed light on the extent of the risks, and ensure practitioners are aware of them.
In comparison with previous works investigating label-only poisoning-enhanced membership inference attacks, we show that the adversary can manage to reliably compromise the privacy of a significant fraction of the training data.
We argue that this type of privacy risk should be seriously taken into account and encourage practitioners to adopt private training procedures, such as differentially private training, to minimize it.
}

\bibliography{iclr2024_conference}

\appendix
\section{Additional Experiments}

In this appendix we include additional results of our experiments with the \atkname\ attack.

\subsection{AUC and MI Accuracy Metric} \label{apndx:MIandAUC}

We report here a comparison between our attack and the existing state of the art label-only MI attacks, Gap \citep{yeom2018privacy} and Decision-boundary \citep{choquette-choo21a}, on aggregate performance metrics: the AUC (Area Under the Curve) score of the ROC (Receiver Operating Characteristic) curve and the average accuracy.
\Cref{tab:MIAUC_res} shows the values of these metrics on the three image datasets used in previous evaluations.
Interestingly, \atkname\ achieves superior average values in all tested scenarios.

{\begin{table}[h]
		\centering 
            \caption{ {\bf Comparison of Label-only attacks on AUC and Membership Inference (MI) Accuracy metric} for GTSRB, CIFAR-10 and CIFAR-100 datasets. Our attack uses a combination of adaptive poisoning, training shadow models (SMs) and careful selection of multiple queries (MQs) to outperform prior attacks.} \label{tab:MIAUC_res}
		\begin{adjustbox}{max width= 1.0\textwidth}{  
				\begin{tabular}{l c c c r r r r r r} 
					
				
                     &   &  &  &  \multicolumn{3}{c}{{\bf AUC}}  &  \multicolumn{3}{c}{{\bf MI Accuracy}}\\ 

					\cmidrule(lr){5-7} \cmidrule(lr){8-10}
                    
                    {\bf Label-Only Attack} & {\bf Poison} & {\bf SMs} & {\bf MQs} & GTSRB  & CIFAR-10 & CIFAR-100 & GTSRB  & CIFAR-10 & CIFAR-100\\

                    \midrule

                    Gap  & \tikz\draw[black, thick] (0,0) circle (.6ex); & \tikz\draw[black, thick] (0,0) circle (.6ex);& \tikz\draw[black, thick] (0,0) circle (.6ex); 
                    & 50.6\% & 57.7\% & 73.8\% & 50.6\% & 57.7\% & 73.8\% \\
                    
                    Decision-Boundary  & \tikz\draw[black, thick] (0,0) circle (.6ex); & \tikz\draw[black,  fill = black, thick] (0,0) circle (.6ex);& \tikz\draw[black,  fill = black, thick] (0,0) circle (.6ex); 
                    & 51.5\% & 62.8\% & 84.9\% & 51.3\% & 62.4\% & 81.1\%\\

                    \midrule

                    Chameleon (Ours) & \tikz\draw[black, fill = black, thick] (0,0) circle (.6ex); & \tikz\draw[black,  fill = black, thick] (0,0) circle (.6ex);& \tikz\draw[black,  fill = black, thick] (0,0) circle (.6ex); 
                    & {\bf 71.9\%} & {\bf 76.3\%} & {\bf 92.6\%} & {\bf 65.2\%} & {\bf 68.5\%} & {\bf 85.2\%} \\

				\end{tabular}
			}
		\end{adjustbox}
	\end{table}
}

\subsection{Membership Neighborhood Stage} \label{apndx:MemNbrhood}

\begin{figure*}[h]{
        \centering
        
		\begin{subfigure}[b]{0.45\textwidth}
			\includegraphics[width= \textwidth]{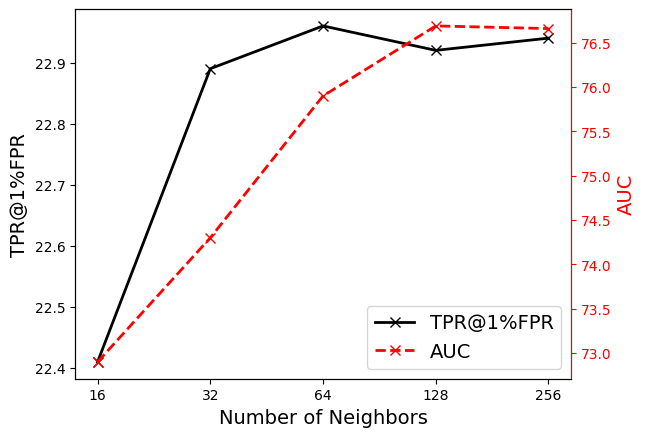}%
			
			\caption{\centering 
			TPR and AUC by varying size of Membership Neighborhood.}
			\label{fig:Ab-Memsize}
		\end{subfigure}
    	\begin{subfigure}[b]{0.45\textwidth}
			\includegraphics[width=\textwidth]{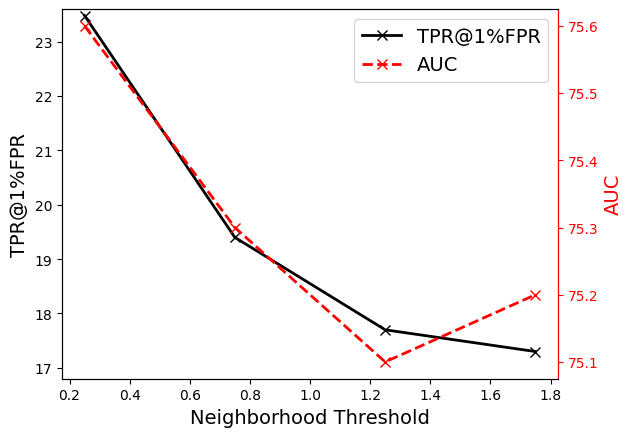}%
			\caption{ \centering TPR  and AUC by varying neighborhood threshold $\nbrthresh$.}\label{fig:Ab-MemTh}
		\end{subfigure}		
        \caption{{\bf Ablations for Membership Neighborhood stage on CIFAR-10 dataset.} We provide experiments on two hyperparameters used in this stage: size of the membership neighborhood, and membership threshold}
  
}
\end{figure*}

We analyze the impact on our attack's performance by varying the size of the membership neighborhood set and the neighborhood threshold $\nbrthresh$ individually. Recall that the membership neighborhood for a challenge point is designed to compute a proxy score for our distinguishing test, which is achieved by finding candidates that satisfy \Cref{eq:neighbor}, described in Section \ref{sec:attack_details}.

\medskip
\noindent
\emph {i) Size of Membership Neighborhood.} In Figure \ref{fig:Ab-Memsize}, we show the impact of the size of membership neighborhood on the attack success. We observe that as the size of the neighborhood increases from $16$ to $64$ samples, the TPR@$1\%$ FPR and AUC of our attack improves by $0.6\%$ and $3\%$ respectively. However, further increasing the neighborhood size beyond $64$ samples does not significantly improve the TPR and AUC, as indicated by the oscillating values. Therefore, setting the membership neighborhood size to $64$ samples or larger should provide a satisfactory level of attack performance.

\medskip
\noindent
\emph{ii) Neighborhood threshold.} We now vary the neighborhood threshold $\nbrthresh$ to observe it's impact on the attack's performance. The neighborhood threshold $\nbrthresh$ determines the selection of neighbors for the challenge point. In Figure \ref{fig:Ab-MemTh}, we observe that setting $\nbrthresh$ to $0.25$ results in the highest TPR@$1\%$ FPR of $23.5\%$, while increasing $\nbrthresh$ to $1.75$ decreases the TPR to $17.3\%$. This aligns with our intuition that systematically including samples in the membership neighborhood that are closer to the challenge point improves the effectiveness of our attack compared to choosing distant samples or using arbitrary random augmentations. Additionally, we observe a decrease in the AUC score when introducing distant samples in the membership neighborhood. However, this decrease is relatively more robust compared to the TPR metric.

\subsection{Data Modalities and Architectures} \label{apndx:DMA}

\begin{table}[h]
\centering
\small
\caption{Effectiveness of \atkname\ on Purchase-100 (P-100) and CIFAR-10 (C-10)  datasets  over various model architectures. Our attack achieves high TPR values across various model architectures.}
\label{tab:different_modalities}
\begin{adjustbox}{max width=\textwidth}{  
\begin{tabular}{lll r r r r r r}

 &  &  & \multicolumn{4}{c}{True Positive Rate (TPR) }&   &  \\ \cmidrule(lr){4-7} 

Modality &  Dataset &  Model Type& @0.1\%FPR & @1\%FPR & @5\%FPR &@10\%FPR &  AUC & MI Accuracy\\ \midrule 

 
 Tabular & P-100 & 1-NN & 8.7\% & 34.8\% & 62.2\% & 72.9\% & 92.0\% & 83.8\% \\
        & & 2-NN & 18.3\% & 45.8\% & 76.9\% & 88.8\% & 95.9\% & 89.6\%\\

 \midrule

Image & C-10 & ResNet-18 & 8.2\% & 22.8\% & 34.7\% & 42.8\% &  76.2\% & 68.5\% \\

 &  & ResNet-34 & 9.1\% & 24.6\% & 35.4\% & 43.1\% &  76.6\% & 69.2\% \\

 &  & ResNet-50 & 7.4\%& 23.3\% & 33.9\% & 41.6\% &  76.3\% & 69.8\% \\

 &  & VGG-16 & 8.6\% & 29.4\% & 49.1\% & 59.3\% & 78.6\% & 75.4\% \\
 

\end{tabular}
} \end{adjustbox}
\end{table}

We evaluate here the behavior of our attack on different data modalities, model sizes and architectures.

For our experiments on tabular data, we use the Purchase-100 dataset, with a setup similar to \citet{choquette-choo21a}: a one-hidden-layer neural network, with 128 internal nodes, trained for 100 epochs. 
We do not use any augmentation during training, and we construct neighborhood candidates by flipping binary values according to Bernoulli noise with probability 2.5\%.
Here, we apply the same experimental settings as detailed in \Cref{sec:exp-setup}, with 500 challenges points and $m=8$, and a slightly lower $t_p = 0.1$. 
The first row of \Cref{tab:different_modalities} reports the results of the attack on this modality.
Interestingly, despite the limited size of the model, we observe a high AUC and relatively high TPR at 1\% FPR. 

To observe the effect of \atkname\ for different model sizes we experimented with scaling the dimension of both the feed-forward network used for Purchase-100 and the base ResNet model used throughout \Cref{sec:experiments}.
For the feed-forward model we added a single layer, which was already sufficient to achieve perfect training set accuracy, due to the limited complexity of the classification task.
On CIFAR-10, we compared the results on ResNet 18, 34 and 50, increasing the number of trainable parameters from 11 million to 23 million.
For the larger models we increased the training epochs from 100 to 125.
Finally, we considered a VGG-16~\citep{SimonyanZ14a} architecture pre-trained on ImageNet, which we fine-tuned for 70 epochs. 
This is a considerably larger model, with roughly 138 million parameters.

We observe generally similar attack performance on the ResNet models, as shown in \Cref{tab:different_modalities}.
The largest model for both modalities, instead, showed significantly higher TPR values at low false positive rates such as 0.1\% and 1\%. 
This trend can be attributed to the tendency of larger models to memorize the training data with greater ease.

\subsection{Differential Privacy} \label{apndx: DP}

\begin{table}[h]
\centering
\small
\caption{TPR at various FPR values for our \atkname~ attack when models are trained using DP-SGD on CIFAR-10 dataset. Differential Privacy significantly mitigates the impact of our attack but also adversely impacts the model's accuracy.}
\label{tab:DP}
\begin{adjustbox}{max width=\textwidth}{  
\begin{tabular}{l r r r r r r}

Privacy Budget & Model Accuracy & TPR@1\%FPR & TPR@5\%FPR & TPR@10\%FPR & AUC  & MI Accuracy\\ \midrule 


 $\epsilon = \infty $ (No DP) & 84.3\%  &   22.6\%    &      34.8\%     &   42.9\%        &  76.8\% & 69.3\% \\

 $\epsilon = 100$ &  57.6\% &    6.1\%    &      14.8\%     &   26.7\%        &  61.8\% & 59.7\% \\ 

$\epsilon = 50$ &  $56.8$\% &    0.0\%    &      11.7\%     &   21.1\%        &  58.9\% & 57.9\% \\ 

$\epsilon = 32$ &  $57.2$\% &   0.0\%    &      9.9\%     &   18.9\%        &  58.8\% & 58.3\% \\ 

$\epsilon = 16$ &  $56.4$\% &    0.0\%    &     0.0\%     &   14.2\%        &  55.8\% & 56.0\% \\ 

 $\epsilon = 8$ &  $54.8\%$  &  0.0\%    &      0.0\%     &   12.6\%        &  53.7\% & 53.7\% \\

$\epsilon = 4$ &  $49.4\% $  &  0.0\%    &      0.0\%     &   11.1\%        &  52.4\% & 52.6\% \\

\end{tabular}
} \end{adjustbox}
\end{table}

We evaluate the resilience of our \atkname~ attack against models trained using DP-SGD \citep{Abadi16}.  We use PyTorch’s differential privacy library, Opacus \citep{opacus}, to train our models. The privacy parameters are configured with $\epsilon$ values of $\{4,8,16, 32, 50, 100, \infty \}$ and $\delta = 10^{-5}$, alongside a clipping norm of $C=5$. Our training procedure aligns with that of previous works \citep{ImageNetDP22, de2022unlocking}, which involves replacing Batch Normalization with Group Normalization with group size set to  $G = 16$ and the omission of data augmentations, which have been observed to reduce model utility when trained with DP.
In Table \ref{tab:DP}, we observe that as $\epsilon$ decreases, our attack success also degrades showing us that DP is effective at mitigating our attack.  However, we also observe that the accuracy of the model plummets with decrease in $\epsilon$. Thus, DP can be used as a defense strategy, but comes at an expense of model utility.


\section{Attack Success and Cost Analysis} \label{apndx:CostAnalysis}

We now analyze the computational cost of our attack and its implications on our attack's success. Recall in \Cref{sec:MultPoint}, we determined the total number of shadow models to be $2(\mathsf{k_{max}}+1)m$, where $m$ and $\mathsf{k_{max}}$ denote the hyperparameters in Algorithm  \ref{alg:adp_psn_set}. In Table \ref{tab:C10-cost}, we vary these parameters and observe their effects on our attack's success and the number of shadow models trained upon the algorithm's completion. Each entry in Table \ref{tab:C10-cost} represents a tuple indicating TPR@1\%FPR and the total shadow models trained, respectively. The results are presented for $\mathsf{k_{max}} \leq 4$, as Algorithm \ref{alg:adp_psn_set} terminates early (at Step 15) when $\mathsf{k_{max}} \geq 5$.

\begin{table}[h]
\centering
\small
\caption{Evaluation of attack success and computational cost for our \atkname~attack on CIFAR-10 dataset by varying hyperparameters $m$ (number of OUT models) and $\mathsf{k_{max}}$ (maximum iterations) given in Algorithm \ref{alg:adp_psn_set}. Each entry is presented as a tuple, indicating TPR@1\%FPR and the total number of (ResNet-18) shadow models trained at the completion of Algorithm \ref{alg:adp_psn_set}.}
\label{tab:C10-cost}
\begin{adjustbox}{min width= 10.5cm}{  
\begin{tabular}{ c | r r r r}

CIFAR-10 &   \multicolumn{1}{c}{$m = 1$} & \multicolumn{1}{c}{$m = 2$} & \multicolumn{1}{c}{$m = 4$} & \multicolumn{1}{c}{$m = 8$} \\
\midrule 
$\mathsf{k_{max}} = 1$ & (0\%, 2) & (0\%, 4) & (1.1\%, 8) & (1.1\%, 16) \\[2pt]

$\mathsf{k_{max}} = 2$ & (13.2\%, 6) & (15.6\%, 12) & (20.7\%, 24) & (22.4\%, 48) \\ [2pt]

$\mathsf{k_{max}} = 3$ & (13.2\%, 8) & (15.8\%, 16) & (20.9\%, 32) & (22.8\%, 64) \\ [2pt]

$\mathsf{k_{max}} = 4$ & (13.6\%, 10) & (15.9\%, 20) & (21.1\%, 40) & (22.9\%, 80) \\ [2pt]

\end{tabular}
} \end{adjustbox}
\end{table}

We observe that our attack attains a notable TPR of $22.9\%$ when setting $m=8$ and $k=4$, albeit at the expense of training $80$ shadow models. However, for the attack configuration with $m=2$ and $k=3$, \atkname\ still achieves a high TPR of $15.8\%$ while requiring only $16$ shadow models. This computationally constrained variant of our attack still demonstrates a TPR improvement of $12.1\times$ over the state-of-the-art Decision Boundary attacks. 
Consequently, in computationally restrictive scenarios, a practical guideline would be to set $m=2$ and $k=3$.

Interestingly, even our computationally expensive variant ($m=8$ and $k=4$) still trains fewer models than the state-of-the-art confidence-based attacks like \citet{LiRA, TruthSerum}, which typically use 128 shadow models (64 IN and 64 OUT).

{We also conduct a cost analysis on the more complex CIFAR-100 dataset, as presented in Table \ref{tab:C100-cost}. We observe similar TPR improvement of $13.1\times$ over the Decision-Boundary attack, while training as few as $12$ shadow models. With the CIFAR-100 dataset, our algorithm terminates even earlier at $\mathsf{k_{max}} = 2$, requiring fewer models to be trained.}

\begin{table}[h]
\centering
\small
\caption{{Analyzing \atkname~attack success and computational cost on CIFAR-100, varying hyperparameters $m$ and $\mathsf{k_{max}}$ (Algorithm \ref{alg:adp_psn_set}). Entries represent TPR@1\%FPR and the total number of (ReseNet-18) shadow models trained.}}

\label{tab:C100-cost}
\begin{adjustbox}{min width= 10.5cm}{  
\begin{tabular}{ c | r r r r}

CIFAR-100 &   \multicolumn{1}{c}{$m = 1$} & \multicolumn{1}{c}{$m = 2$} & \multicolumn{1}{c}{$m = 4$} & \multicolumn{1}{c}{$m = 8$} \\
\midrule 
$\mathsf{k_{max}} = 1$ & (33.8\%, 2) & (43.7\%, 4) & (50.3\%, 8) & (51.1\%, 16) \\[2pt]

$\mathsf{k_{max}} = 2$ & (37.2\%, 6) & (47.2\%, 12) & (50.7\%, 24) & (52.5\%, 48) \\ [2pt]

\end{tabular}
} \end{adjustbox}
\end{table}

{\emph{Query Complexity:}  Though prior Decision-Boundary attacks \citep{choquette-choo21a, LO2} do not train any shadow models, they do require a large number of queries (typically 2,500+ queries) to be made to the target model per challenge point. On the contrary, after training $2(\mathsf{k_{max}}+1)m$ shadow models for a set of $n$ challenge points, our attack only requires atmost $64$ queries per challenge point to the target model. This makes our attack $>39\times$ more query-efficient than prior attack.}

{
\emph{Running Time:} We present the average running time for both attacks, on a machine with an AMD Threadripper 5955WX and a single NVIDIA RTX 4090. We run the Decision-Boundary (DB) attack using the parameters provided by the code\footnote{https://github.com/zhenglisec/Decision-based-MIA} in \citet{LO2} on CIFAR-10 dataset. It takes $29.1$ minutes to run the attack on 500 challenge points while achieving a TPR of 1.1\% (@1\%FPR). 
}

{ For our attack, training each ResNet-18 shadow model requires about 80 seconds. That translates to 106.7 minutes of training time for the expensive  configuration of m=8 and k=4  but achieves a substantial TPR improvement of 20.8$\times$ compared to the DB attack. Conversely, the computational restricted version of our attack (m=2 and k=3) takes only 21.3 minutes of training time  while still achieving a significant TPR improvement of 12.1$\times$ over the DB attack.}

%
{Note that, the membership neighborhood stage in our attack requires only access to the non-poisoned shadow models, allowing it to run in parallel as soon as the first iteration (Step 4, Algorithm \ref{alg:adp_psn_set}) of the adaptive poisoning stage concludes, incurring no additional time overhead. Our querying phase takes approximately a second to query 500 challenge points and their respective neighborhoods (each of size 64).
}

\section{Analysis of Label-Only MI Under Poisoning} \label{apdx:MIAnalysis}

Let $\dist$ denote the distribution from which $n$ samples $z_1,\ldots,z_n$ are sampled in an iid manner. For classification based tasks, a sample is defined as  $z_i = (x_i,y_i)$ where $x_i$ denotes the input vector and $y_i$ denotes the class label. We assume binary membership inference variables $m_1, \ldots, m_n$ that are drawn independently with probability $\Pr(m_i = 1) = \lambda$, where samples with $m_i = 1$ are a part of the training set. 
We model the training algorithm as a random process such that the posterior distribution of the model parameters given the training data $\theta| z_1,\ldots,z_n,m_1, \ldots, m_n$ satisfies
\begin{align} \label{eqn:postdist}
    \Pr(\theta| z_1,\ldots,z_n,m_1,\ldots,m_n) \propto e^{-\frac{1}{\tau} \sum_{i=1}^n m_i L(\theta, z_i)}
\end{align}
where $\tau$ and $L$ denote the temperature parameter and the loss functions respectively. Parameter $\tau = 1$ corresponds to the case of the Bayesian posterior, $\tau \rightarrow 0$ the case of MAP (Maximum A
Posteriori) inference and a small $\tau$ denotes the case of averaged
SGD.  This assumption on the posterior distribution of the model parameters have also been made in prior membership inference works such as  \citet{sablayrolles2019whitebox} and \citet{ye2022enhanced}. The prior on $\theta$ is assumed to be uniform. 

Without loss of generality, let us analyze the case of $z_1 = (x_1, y_1)$, for a multi-class classification task. Let $C$ denote the total number of classes in the classification task.  We introduce poisoning by creating a poisoned dataset $\psndata$ which contains $k$ poisoned replicas of $z^p_1 = (x_1, y^p_1)$, where $y^p_1 \neq y_1$. Note that, all poisoned replicas have the same poisoned label $y^p_1$ that is distinct from the true label. The posterior in \Cref{eqn:postdist} can then be re-written as:
\begin{align} \label{eqn:psndist}
    \Pr(\theta_p| z_1,\ldots,z_n,m_1,\ldots,m_n,\psndata) \propto e^{-\frac{1}{\tau} \sum_{i=1}^n m_i L(\theta_p, z_i) - \frac{k}{\tau}L(\theta_p, z_1^p)} 
\end{align}

where term $\frac{k}{\tau} L(\theta_p, z_1^p)$ denotes the sum over all the loss terms introduced by $k$ poisoned replicas. 
Furthermore, we gather information about other samples and their memberships  in set $T = \{z_2,\ldots,z_n,m_2,\ldots,m_n\}$. We assume the loss function $L$ to be a 0-1 loss function, so that we can perform a concrete analysis of the loss in \Cref{eqn:psndist}. 

\paragraph{Assumptions.} 
{We now explicitly list the set of assumptions that will be utilized to design and analyze our optimal attack.
\begin{description}
    \item[-] The posterior distribution of the model parameters given the poisoned dataset satisfies \Cref{eqn:psndist},  where the loss function $L$ is assumed to be a 0-1 loss function.
    \item[-] The prior on the model parameters $\theta$ follows a uniform distribution.  
    \item[-] The model parameter selection for classifying challenge point $z_1$ is only dependent on $z_1$,  its membership $m_1$ and the poisoned dataset $D_p$.  We make this assumption based on the findings from \citet{TruthSerum}, where empirical observations revealed that multiple \emph{poisoned models} aimed at inferring the membership of $z_1$ had very similar logit (scaled confidence) scores for challenge point $z_1$. This observation implied that a poisoned model’s prediction on $z_1$ was largely influenced only by the presence/absence of the original point $z_1$ and the poisoned dataset $D_p$.  
\end{description}
}
\paragraph{Poisoning impact on challenge point classification.} Recall that we are interested in analyzing how the addition of $k$ poisoned replicas influences the correct classification of the challenge point $z_1$ as label $y_1$, considering whether $z_1$ is a member or a non-member. Towards this, we define the event $\theta_p(z_1) = y_1$  as selecting a parameter $\theta_p$ that correctly classifies $z_1$ as $y_1$. More formally, we can write the probability of correct classification as follows:

\begin{theorem}
    Given the sample $z_1$,  binary membership variable $m_1$, poisoned dataset $D_p$ and the remaining training set $T$.
    
    \begin{equation} \label{eqn:mem_final}
    \Pr(\theta_p(z_1) = y_1 | z_1,m_1, T,\psndata) = \frac{e^{-k/\tau}}{e^{-m_1/\tau} + e^{-k/\tau} + (C-2) e^{-(k+m_1)/\tau}}
    \end{equation}
\end{theorem}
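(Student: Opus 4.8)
The plan is to collapse the full posterior in \Cref{eqn:psndist} down to a categorical distribution over the label that $\theta_p$ assigns to $x_1$, and then simply read off the probability mass on the correct class $y_1$. First I would invoke the third assumption listed above --- that the parameter selection relevant to classifying $z_1$ depends only on $z_1$, its membership bit $m_1$, and the poisoned set $D_p$ --- to discard from the exponent of \Cref{eqn:psndist} every loss term that does not involve $x_1$. Concretely, the terms indexed by $T = \{z_2,\dots,z_n,m_2,\dots,m_n\}$ contribute a factor independent of how $\theta_p$ labels $x_1$, so it cancels against the normalizing constant; what remains is governed by $m_1 L(\theta_p, z_1) + k\, L(\theta_p, z_1^p)$.

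Next I would partition the parameter space by the predicted label $c = \theta_p(x_1) \in \{1,\dots,C\}$. Because $L$ is the 0-1 loss and $y_1^p \neq y_1$, there are exactly three regimes: (i) $c = y_1$, where $L(\theta_p,z_1)=0$ and $L(\theta_p,z_1^p)=1$, giving exponent weight $e^{-k/\tau}$; (ii) $c = y_1^p$, where $L(\theta_p,z_1)=1$ and $L(\theta_p,z_1^p)=0$, giving weight $e^{-m_1/\tau}$; and (iii) $c \notin \{y_1, y_1^p\}$ --- there are $C-2$ such labels --- where both losses equal $1$, giving weight $e^{-(m_1+k)/\tau}$ each. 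Using the uniform prior on $\theta$ (second assumption) to treat each label as a priori equally likely, the posterior probability of predicting label $c$ is proportional to its weight above; summing the unnormalized weights over all $C$ labels yields the normalizer $e^{-k/\tau} + e^{-m_1/\tau} + (C-2)e^{-(m_1+k)/\tau}$, and dividing the weight of case (i) by this normalizer gives exactly \Cref{eqn:mem_final}.

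The step I expect to be the main obstacle is the passage from ``uniform prior on the high-dimensional parameter $\theta$'' to ``each of the $C$ output labels for $x_1$ has equal prior mass.'' Strictly speaking, a uniform prior over a continuous parameter space does not make the preimages $\{\theta_p : \theta_p(x_1) = c\}$ equinumerous, so I would make explicit that the three assumptions together are to be read as collapsing the model to a latent categorical variable $\hat{y} = \theta_p(x_1)$ with a uniform prior over $\{1,\dots,C\}$ --- the same reading used implicitly by \citet{sablayrolles2019whitebox} and in the poisoning analysis of \citet{TruthSerum}. Under that reading the computation is a finite Bayes update and the remaining algebra is the one-line normalization above. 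A minor bookkeeping point I would flag along the way is that the $k$ poisoned replicas all share the label $y_1^p$, so their combined loss contribution is $k\, L(\theta_p, z_1^p) = k\cdot\mathbbm{1}[\theta_p(x_1)\neq y_1^p]$, which is already encoded in \Cref{eqn:psndist} and is precisely what produces the $e^{-k/\tau}$ and $e^{-(m_1+k)/\tau}$ factors.
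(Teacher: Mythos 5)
Your proposal is correct and follows essentially the same route as the paper: use the assumption that the prediction on $z_1$ depends only on $z_1$, $m_1$, and $\psndata$ to drop the terms from $T$, then partition the outcomes by the predicted label under the 0-1 loss (weight $e^{-k/\tau}$ for $y_1$, $e^{-m_1/\tau}$ for $y_1^p$, and $e^{-(k+m_1)/\tau}$ for each of the remaining $C-2$ labels) and normalize. The only cosmetic difference is that you keep $m_1$ symbolic throughout while the paper computes the $m_1=0$ and $m_1=1$ cases separately and then unifies them; your explicit remark that the assumptions amount to treating $\theta_p(x_1)$ as a categorical variable with uniform prior over the $C$ labels is a fair reading of what the paper does implicitly.
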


\begin{proof}
Let us first consider the OUT case when $z_1$ is not a part of the training set, i.e. $m_1 = 0$.  Formally we can write the probability of selecting a parameter that results in classification of sample $z_1$ as label $y_1$ as follows:  
\begin{equation*}
\Pr(\theta_p(z_1) =y_1 | z_1,m_1= 0,T,\psndata)
\end{equation*}
Based on our assumption that under the presence of poisoning, the model parameter selection for  classification of $z_1$ depends only on $z_1$, $m_1$ and $\psndata$, we can write
\begin{equation} \label{eqn:mem}
\Pr(\theta_p(z_1) = y_1 | z_1,m_1= 0,T,\psndata) = \Pr(\theta_p(z_1) = y_1 | z_1,m_1= 0,\psndata)
\end{equation}

Subsequently, we can use \Cref{eqn:psndist} to reformulate \Cref{eqn:mem} as follows:
\begin{align}\label{eqn:outdist} 
\Pr(\theta_p(z_1) = y_1 | z_1,m_1= 0,\psndata) = \frac{e^{-k/\tau}}{e^0 + (C-1) e^{-k/\tau}} 
\end{align}
In this equation, the numerator $e^{-k/\tau}$ denotes the outcome where  a parameter $\theta_p$ is chosen resulting in classification of $z_1$ as $y_1$. Similarly, the terms $e^0$ and $(C-1)e^{-k/\tau}$ in the denominator represent the outcomes where a parameter is selected such that it classifies $z_1$ as $y_1^p$ and any other label except $y_1^p$, respectively.

Similar to \Cref{eqn:outdist}, we can formulate an equation for the IN case as follows:

\begin{align}\label{eqn:indist} 
\Pr(\theta_p(z_1) = y_1 | z_1,m_1= 1,\psndata) = \frac{e^{-k/\tau}}{e^{-1/\tau} + e^{-k/\tau} + (C-2) e^{-(k+1)/\tau}}
\end{align}

Similar to \Cref{eqn:outdist}, the numerator $e^{-k/\tau}$ denotes the outcome that classifies sample $z_1$ as $y_1$.The terms $e^{-1/\tau}$ and $e^{-k/\tau}$ in the denominator represent the outcomes when sample $z_1$ is classified as $y_1^p$ and $y_1$ respectively. Term $(C-2) e^{-(k+1)/\tau}$ denotes the sum over all outcomes where sample $z_1$ is classified as any  label except $y_1$ and $ y_1^p$. 
Now, by combining \Cref{eqn:outdist} and \Cref{eqn:indist}, we arrive at a unified expression represented by \Cref{eqn:mem_final}.
\end{proof}

\paragraph{Optimal Attack}
{ Our goal now is to formulate an \emph{optimal} attack in the label-only setting that maximizes the TPR value at fixed FPR of x\%, when $k$ poisoned replicas of $z_1^p$ are introduced into the training set, adhering to the list of assumptions defined earlier.
}

We define two events: 

\begin{description}
    \item[-] If $\theta(z_1) = y_1$, we say $z_1$ is "IN" the training set with probability $p_0$, else we say "OUT" with probability $1-p_0$.
    \item[-]  If $\theta(z_1) \neq y_1$, we say $z_1$ is "IN" the training set with probability $p_1$, else we say "OUT" with probability $1-p_1$.
\end{description}

We can then compute the maximum TPR as follows:
 
\begin{theorem}\label{thm:tpr}
Given sample $z_1$ and a training dataset that includes $k$ poisoned replicas of $z_1$. The maximum TPR at $x\%$ FPR is given as

\begin{align*}
   x' \times \frac{(C-1) + e^{k/\tau}}{e^{(k-1)/\tau} + (C-2) e^{-1/\tau} + 1} - p \times \frac{e^{k/\tau} - e^{(k-1)/\tau} + (C-2)(1-e^{-1/\tau})}{e^{(k-1)/\tau} + (C-2) e^{-1/\tau} + 1}
\end{align*}

where probability $p = max\left(0, \frac{x'(C-1) + x'e^{k/\tau} - 1}{(C-2) + e^{k/\tau}} \right)$ and $x' = x/100$.

\end{theorem}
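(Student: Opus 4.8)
The plan is to recognize the optimal label-only attack as a randomized binary hypothesis test whose only observable is the single bit $\mathbf{1}[\theta_p(z_1)=y_1]$, and then solve the resulting linear program via a Neyman--Pearson / water-filling argument.

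First I would fix notation. Since the attacker only sees whether the (poisoned) target model classifies $z_1$ correctly, the most general attack is specified by two numbers $p_0,p_1\in[0,1]$: output ``IN'' with probability $p_0$ on the event $\theta_p(z_1)=y_1$ and with probability $p_1$ on the event $\theta_p(z_1)\neq y_1$. Write $q_{\mathrm{IN}}:=\Pr(\theta_p(z_1)=y_1\mid m_1=1,\psndata)$ and $q_{\mathrm{OUT}}:=\Pr(\theta_p(z_1)=y_1\mid m_1=0,\psndata)$. From \Cref{eqn:indist} and \Cref{eqn:outdist}, multiplying numerator and denominator by $e^{k/\tau}$,
\[
q_{\mathrm{OUT}}=\frac{1}{e^{k/\tau}+(C-1)},\qquad q_{\mathrm{IN}}=\frac{1}{e^{(k-1)/\tau}+(C-2)e^{-1/\tau}+1}.
\]
Then $\mathrm{TPR}=p_0q_{\mathrm{IN}}+p_1(1-q_{\mathrm{IN}})$ and $\mathrm{FPR}=p_0q_{\mathrm{OUT}}+p_1(1-q_{\mathrm{OUT}})$, and the goal is to maximize $\mathrm{TPR}$ over $(p_0,p_1)\in[0,1]^2$ subject to $\mathrm{FPR}=x':=x/100$.

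The key step is comparing the two observable events in the Neyman--Pearson sense. I would first show $q_{\mathrm{IN}}>q_{\mathrm{OUT}}$ by subtracting denominators: $e^{k/\tau}+(C-1)-\bigl(e^{(k-1)/\tau}+(C-2)e^{-1/\tau}+1\bigr)=e^{(k-1)/\tau}(e^{1/\tau}-1)+(C-2)(1-e^{-1/\tau})>0$ for $\tau>0$, $C\ge2$. Hence the ``correct'' event carries the larger likelihood ratio for membership ($q_{\mathrm{IN}}/q_{\mathrm{OUT}}>1>(1-q_{\mathrm{IN}})/(1-q_{\mathrm{OUT}})$), so the optimal rule spends its FPR budget on that event first. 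This splits into two regimes: if $x'<q_{\mathrm{OUT}}$, set $p_1=0$, $p_0=x'/q_{\mathrm{OUT}}$, giving $\mathrm{TPR}=x'\,q_{\mathrm{IN}}/q_{\mathrm{OUT}}$; if $x'\ge q_{\mathrm{OUT}}$, set $p_0=1$, $p_1=(x'-q_{\mathrm{OUT}})/(1-q_{\mathrm{OUT}})$, giving $\mathrm{TPR}=q_{\mathrm{IN}}+p_1(1-q_{\mathrm{IN}})$. A short computation identifies the quantity $p$ in the statement with $p_1$: $p_1=\frac{x'(e^{k/\tau}+C-1)-1}{e^{k/\tau}+C-2}$, and the $\max(0,\cdot)$ is exactly what collapses the first regime into $p=0$.

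To finish, I would substitute back and simplify. Using the identity $1-q_{\mathrm{IN}}+\frac{e^{k/\tau}-e^{(k-1)/\tau}+(C-2)(1-e^{-1/\tau})}{e^{(k-1)/\tau}+(C-2)e^{-1/\tau}+1}=\frac{e^{k/\tau}+C-2}{e^{(k-1)/\tau}+(C-2)e^{-1/\tau}+1}$ together with $p\,(e^{k/\tau}+C-2)=x'(e^{k/\tau}+C-1)-1$, the expression $\mathrm{TPR}=q_{\mathrm{IN}}+p(1-q_{\mathrm{IN}})$ rearranges into $x'\cdot\frac{(C-1)+e^{k/\tau}}{e^{(k-1)/\tau}+(C-2)e^{-1/\tau}+1}-p\cdot\frac{e^{k/\tau}-e^{(k-1)/\tau}+(C-2)(1-e^{-1/\tau})}{e^{(k-1)/\tau}+(C-2)e^{-1/\tau}+1}$, which is the claimed formula, and in the small-$x'$ regime it reduces to the first term with $p=0$. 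The main obstacle is not any single hard estimate but rather making the Neyman--Pearson optimality argument airtight (no randomized rule beats water-filling over the binary observation) and then carrying out the two-case bookkeeping so that both regimes fuse into the single closed form via $\max(0,\cdot)$; the algebra in this last step, though routine, is the most error-prone part.
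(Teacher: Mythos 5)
Your proposal is correct and follows essentially the same route as the paper: both parametrize the attack by the two randomization probabilities $(p_0,p_1)$ on the single binary observable, use the FPR constraint to eliminate $p_0$, note that the resulting TPR is decreasing in $p_1$ (the coefficient positivity you verify is exactly the $q_{\mathrm{IN}}>q_{\mathrm{OUT}}$ comparison the paper implicitly relies on), and hence set $p_1$ to its minimum feasible value $\max(0,\cdot)$. Your Neyman--Pearson/water-filling framing with the explicit two-regime split is just a repackaging of the paper's constrained linear optimization over $p_1$, with the added benefit of explicitly checking the sign of the $p_1$ coefficient, which the paper asserts without proof.
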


\begin{proof}
Let $x' = x/100$. We can start by writing the equation for FPR as: 

\begin{align*} 
\Pr(\text{ "IN" }| ~z_1,m_1 = 0, T, \psndata) = x'
\end{align*}

We can expand the left-hand side of the equation as follows:
\begin{align*} 
\Pr(\text{ "IN" }| ~z_1,m_1 = 0, T, \psndata) = \Pr(\text{ "IN" }| ~\theta_p(z_1) = y_1) \times \Pr(\theta_p(z_1) = y_1 | z_1,m_1= 0,T, \psndata) 
\\+ \Pr(\text{ "IN" }| ~\theta_p(z_1) \neq y_1) \times \Pr(\theta_p(z_1) \neq y_1 | z_1,m_1= 0,T,\psndata)
\\ = p_0 \times \frac{1}{e^{k/\tau} + (C-1)}  + p_1 \times \frac{e^{k/\tau} + (C-2)}{e^{k/\tau} + (C-1)}
\end{align*}

At a fixed FPR $x'$, the above equation can then be re-written as:

\begin{equation} \label{eqn:p0-p1-relation}
p_0 = x' \times ( (C-1) + e^{k/\tau} ) - p_1 \times ((C-2) + e^{k/\tau})
\end{equation}

We also know that the following inequalities hold $0 \leq p_0,~ p_1 \leq 1$.  By substituting $p_0$ as a function of $p_1$ from \Cref{eqn:p0-p1-relation}, we get:
\begin{equation*}
max\left(0, \frac{x'(C-1) + x'e^{k/\tau} - 1}{(C-2) + e^{k/\tau}} \right) \leq p_1 \leq min\left(1, \frac{x'(C-1) + x'e^{k/\tau}}{(C-2) + e^{k/\tau}} \right)
\end{equation*}


Similar to the FPR equation, we formulate the TPR as follows:
\begin{align*} 
\Pr(\text{ "IN" }| ~z_1,m_1 = 1, T, \psndata) = \Pr(\text{ "IN" }| ~\theta_p(z_1) = y_1) \times \Pr(\theta_p(z_1) = y_1 | z_1,m_1= 1,T, \psndata) 
\\+ \Pr(\text{ "IN" }| ~\theta_p(z_1) \neq y_1) \times \Pr(\theta_p(z_1) \neq y_1 | z_1,m_1= 1,T,\psndata)
\\ = p_0 \times  \frac{1}{e^{(k-1)/\tau} + (C-2) e^{-1/\tau} + 1} + p_1 \times \frac{e^{(k-1)/\tau} + (C-2) e^{-1/\tau}}{e^{(k-1)/\tau} + (C-2) e^{-1/\tau} + 1}
\end{align*}

We substitute \Cref{eqn:p0-p1-relation} into the above equation and get:

\begin{align} \label{eqn:tpr_intrmid}
   = x' \times \frac{(C-1) + e^{k/\tau}}{e^{(k-1)/\tau} + (C-2) e^{-1/\tau} + 1} - p_1 \times \frac{e^{k/\tau} - e^{(k-1)/\tau} + (C-2)(1-e^{-1/\tau})}{e^{(k-1)/\tau} + (C-2) e^{-1/\tau} + 1}
\end{align}

The goal is to maximize the above TPR equation when FPR $=  x'$. We can then write \Cref{eqn:tpr_intrmid} as a constrained optimization problem as follows: 

\begin{equation} \label{eqn:tpr_opt}
\begin{aligned} 
\max_{p_1} \quad & x' \times \frac{(C-1) + e^{k/\tau}}{e^{(k-1)/\tau} + (C-2) e^{-1/\tau} + 1} - p_1 \times \frac{e^{k/\tau} - e^{(k-1)/\tau} + (C-2)(1-e^{-1/\tau})}{e^{(k-1)/\tau} + (C-2) e^{-1/\tau} + 1}\\
\\
\textrm{s.t.} \quad & max\left(0, \frac{x'(C-1) + x'e^{k/\tau} - 1}{(C-2) + e^{k/\tau}} \right) \leq p_1 \leq min\left(1, \frac{x'(C-1) + x'e^{k/\tau}}{(C-2) + e^{k/\tau}} \right)\\
\end{aligned}
\end{equation}

In \Cref{eqn:tpr_opt}, we observe that the first term is a constant and the coefficient of $p_1$ is positive. Consequently, we must set $p_1$ to its minimum possible value in order to maximize the TPR value. Thus the TPR equation can be re-written as: 

\begin{align} \label{eqn:tpr_eqn}
   = x' \times \frac{(C-1) + e^{k/\tau}}{e^{(k-1)/\tau} + (C-2) e^{-1/\tau} + 1} - p \times \frac{e^{k/\tau} - e^{(k-1)/\tau} + (C-2)(1-e^{-1/\tau})}{e^{(k-1)/\tau} + (C-2) e^{-1/\tau} + 1}
\end{align}

where probability $p = max\left(0, \frac{x'(C-1) + x'e^{k/\tau} - 1}{(C-2) + e^{k/\tau}} \right)$.

\end{proof}

As previously shown in Figure \ref{fig:Theoretical_TPR} (Section \ref{sec:Analysis}), we plot the TPR as a function of the number of poisoned replicas for our setting using Theorem \ref{thm:tpr}. We set the temperature parameter to a small value $\tau  = 0.5$, the number of classes $C = 10$.  We fix the FPR to $5\%$ and plot our theoretical attack. In order to validate the similarity in behavior between our theoretical model and practical scenario, we run the static version of our label-only attack where we add $k$ poisoned replicas for a challenge point in CIFAR-10 dataset. We observe that the TPR improves with increase with introduction of poisoning and then decreases as the number of poisoned replicas get higher. Note that, the assumptions made in our theoretical analysis do not hold in the absence of poisoning ($k = 0$). Hence, we see a discrepancy between the practical and theoretical attack at $k=0$.

\section{Privacy Game} \label{apdx:threatmodel}

We consider a privacy game, where the attacker aims to guess if a challenge point $(x,y) \sim \dist$ is present in the challenger's training dataset $\traindata$. The game between the challenger $\challenger$ and attacker $\Adv$ proceeds as follows: 
\begin{itemize} [noitemsep]
    \item [1: ] $\challenger$ samples training data $\traindata$ from the underlying distribution $\dist$.
    \item[2: ] $\challenger$ randomly  selects $b \in \{0, 1\}$. If $b = 0$, $\challenger$ samples a point $(x, y) \sim \dist$ uniformly at random, such that $(x,y) \notin \traindata$ . Else, samples $(x, y)$ from $\traindata$ uniformly at random.
    \item [3: ]  $\challenger$ sends the challenge point $(x, y)$ to $\Adv$.  
    \item [4: ]  $\Adv$ constructs a poisoned dataset $\psndata$ and sends it to $\challenger$.
    
    \item [5: ]  $\challenger$ trains a target model $\theta_t$  on the poisoned dataset $\traindata \cup \psndata$.

    \item [6: ] $\challenger$ gives $\Adv$ label-only access to the target model $\theta_t$.
    \item [7: ]  $\Adv$ queries the target model $\theta_t$, guesses a bit $\hat{b}$ and wins if $\hat{b} = b$. 
    
\end{itemize}

The challenger $\challenger$  samples training data $\traindata \sim \dist$ from an underlying data distribution $\dist$. The attacker $\Adv$  has the capability to inject additional poisoned data $\psndata$ into the training data $\traindata$.
The objective of the attacker is to enhance its ability to infer if a specific point $(x,y)$ is present in the training data by interacting with a model trained by challenger  $\challenger$ on data $\traindata \cup \psndata$.  The attacker can only inject $\psndata$ once before the training process begins, and after training, it can only interact with the final trained model to obtain predicted labels. Note that, both the challenger and the attacker have access to the underlying data distribution $\dist$, and know the challenge point $(x,y)$ and training algorithm $\mathcal{T}$, similar to prior works \citep{LiRA,TruthSerum,MIPoison22,wen2023canary}.

\end{document}